\definecolor{darkblue}{rgb}{0.0, 0.0, 0.45}
\definecolor{lightblue}{RGB}{240,248,255}
\definecolor{lightblue2}{rgb}{0.68, 0.85, 0.9}
\definecolor{lightcyan}{rgb}{0.88, 1.0, 1.0}
\definecolor{palepink}{rgb}{0.98, 0.85, 0.87}
\setlist{noitemsep, topsep=0cm}
\date{\today}
\def\@settitle{\begin{center}%
		\baselineskip14\p@\relax
		\normalfont\LARGE\scshape\bfseries
		\@title
	\end{center}%
}
\def\@setauthors{%
  \begingroup
  \def\thanks{\protect\thanks@warning}%
  \trivlist
  \centering\small \@topsep30\p@\relax
  \advance\@topsep by -\baselineskip
  \item\relax
  \author@andify\authors
  \def\\{\protect\linebreak}%
  \authors%
  \ifx\@empty\contribs
  \else
    ,\penalty-3 \space \@setcontribs
    \@closetoccontribs
  \fi
  \endtrivlist
  \endgroup
}
\def\subsection{\@startsection{subsection}{2}%
	\z@{.5\linespacing\@plus.7\linespacing}{.5\linespacing}%
	{\normalfont\large\bfseries}}
\def\subsubsection{\@startsection{subsubsection}{3}%
	\z@{.5\linespacing\@plus.7\linespacing}{.5\linespacing}%
	{\normalfont\itshape}}
\newtheorem{theorem}{Theorem}[section]
\newtheorem*{theorem*}{Theorem}
\newtheorem{definition}{Definition}
\newtheorem*{definition*}{Definition}
\newtheorem{proposition}[theorem]{Proposition}
\newtheorem{corollary}[theorem]{Corollary}
\newtheorem{lemma}[theorem]{Lemma}
\newtheorem{remark}[theorem]{Remark}
\newtheorem{example}{Example}
\newtheorem*{example*}{Example}
\newtheorem{problem}{Problem}
\newtheorem*{problem*}{Problem}
\DeclareMathAlphabet{\mathbbold}{U}{bbold}{m}{n}
\newcommand*{\sumOp}{\operatornamewithlimits{\sum}\limits}
\newcommand{\binarySum}[2]{\!\!\!\!\!\!\!\!\!\!\!{\sumOp_{\;\;\;\;\;\;\; {#1}} \!\!\!\!\!\!\!\!{#2}}}
\newcommand*{\prodOp}{\operatornamewithlimits{\prod}\limits}
\newcommand*{\minOp}{\operatornamewithlimits{min}\limits}
\newcommand*{\limOp}{\operatornamewithlimits{lim}\limits}
\newcommand{\drm}{\mathrm{d}}
\newcommand{\nth}{{\text{\tiny{th}}}}
\newcommand{\trace}{\mathrm{tr}} 
\newcommand{\tr}{\!^{\scalebox{0.65}{$\mathsf{T}$}}}
\newcommand{\inv}{\!\!^{\scalebox{0.65}{$-1$}}}
\newcommand{\gradient}{\nabla}
\newcommand{\Expectation}[1]{\mathbb{E}[#1]}
\newcommand{\bigExpectation}[1]{\mathbb{E}\big[#1\big]}
\newcommand{\autoExpectation}[1]{\mathbb{E}\left[#1\right]}
\newcommand{\diag}{\mathrm{diag}}
\newcommand{\intZ}{{\mathbb{Z}}}
\newcommand{\Real}{{\mathbb{R}}}
\newcommand{\Complex}{{\mathbb{C}}}
\newcommand{\Symm}{{\mathbb{S}}}
\newcommand{\NormalDist}{{\mathcal{N}}}
\newcommand{\UniformDist}{{\mathcal{U}}}
\newcommand{\eye}{\mathbb{I}}
\renewcommand*{\exp}[1]{\mathrm{exp}{\big(#1\big)}}
\newcommand{\expS}[1]{\mathrm{exp}{(#1)}}
\newcommand{\norm}[1]{\left\lVert#1\right\rVert}
\newcommand{\innerS}[2]{{\langle {#1,#2} \rangle}}
\newcommand{\characteristicFunc}[1]{\varphi{\big(#1\big)}}
\newcommand{\characteristicFuncS}[1]{\varphi{(#1)}}
\newcommand{\Pdist}{{\mathbb{P}}}
\newcommand{\zero}{0}
\newcommand{\zeromx}{0}
\newcommand{\one}{{\scalebox{1.05}{$\mathbf{1}$}}}
\newcommand{\Ubb}{{\mathbb{U}}}
\newcommand{\matrixSqrt}[1]{{#1^{^{\frac{1}{2}}}}}
\newcommand{\matrixGradient}[1]{\partial_{_{#1}}\!}
\newcommand{\setProject}[1]{{\mathcal{P}_{#1}}}
\newcommand{\fracComma}{\raisebox{0.5ex}{,}}
\newcommand{\trueFunc}{{h}}
\newcommand{\vecTheta}{{\mathrm{\theta}}}
\newcommand{\Ocal}{{\mathcal{O}}}
\newcommand{\Ntheta}{\mathrm{N}}
\newcommand{\vecY}{{\mathrm{y}}}
\newcommand{\vecV}{{\mathrm{v}}}
\newcommand{\matrixPhi}{\mathrm{\Phi}}
\newcommand{\matrixPhibar}{\bar{\mathrm{\Phi}}}
\newcommand{\matrixH}{\mathrm{H}}
\newcommand{\MuTheta}[1]{\mu_{\vecTheta_{#1}}}
\newcommand{\SigmaTheta}[1]{\Sigma_{\vecTheta_{#1}}}
\newcommand{\sigmaTheta}[1]{\sigma_{\vecTheta_{#1}}}
\newcommand{\Trajectory}{T}
\newcommand{\vecX}{{\mathrm{x}}}
\newcommand{\nX}{n_{\scalebox{0.65}{$\mathrm{x}$}}}
\newcommand{\nY}{n_{\scalebox{0.65}{$\mathrm{y}$}}}
\newcommand{\SigmaX}[1]{\Sigma_{\vecX_{#1}}}
\newcommand{\MuX}[1]{\mu_{\vecX_{#1}}}
\newcommand{\A}{{\mathrm{A}}}
\newcommand{\B}{{\mathrm{B}}}
\newcommand{\vecU}{{\mathrm{u}}}
\newcommand{\nU}{n_{\scalebox{0.65}{$\mathrm{u}$}}}
\newcommand{\vecW}{{\mathrm{w}}}
\newcommand{\SigmaW}[1]{\Sigma_{\vecW_{#1}}}
\newcommand{\Abar}{{\bar{\A}}}
\newcommand{\Bbar}{{\bar{\B}}}
\newcommand{\vecXbar}{{\bar{\vecX}}}
\newcommand{\vecUbar}{{\bar{\vecU}}}
\newcommand{\vecWbar}{{\bar{\vecW}}}
\newcommand{\SigmaWbar}[1]{\Sigma_{\vecWbar_{#1}}}
\newcommand{\vecYbar}{{\bar{\vecY}}}
\newcommand{\vecVbar}{{\bar{\vecV}}}
\newcommand{\SigmaVbar}[1]{\Sigma_{\vecVbar_{#1}}}
\newcommand{\iteration}{\mathrm{k}}
\newcommand{\sigmaV}[1]{\sigma_{\vecV_{#1}}}
\newcommand{\matrixM}{{\mathrm{M}}}
\newcommand{\matrixV}{{\mathrm{V}}}
\newcommand{\matrixS}{{\mathrm{S}}}
\newcommand{\DeltaPhi}{{\Delta\Phi}}
\newcommand{\deltaPhi}{{\Delta\phi}}
\newcommand{\DeltaTheta}{{\Delta\vecTheta}}
\newcommand{\nf}{n_{\scalebox{0.65}{$f$}}}
\newcommand{\Cost}{{\mathcal{J}}}
\newcommand{\vecC}{\mathrm{c}}
\newcommand{\matrixR}{{\mathrm{R}}}
\newcommand{\matrixPhiApp}{\Tilde{\matrixPhi}}
\newcommand{\vecPhiApp}{\Tilde{\phi}}
\newcommand{\vecR}{\mathrm{r}}
\newcommand{\vecS}{\mathrm{s}}
\newcommand{\SigmaPhi}{\Sigma_{\phi}}
\newcommand{\MuPhi}{\mu_{\phi}}
\newcommand{\MuPhiElements}[1]{\mu_{\phi_{#1}}}
\newcommand{\SigmaPhiElements}[1]{\Sigma_{\phi_{#1}}}
\newcommand{\batch}{\tau}
\title[Optimal Bayesian Affine Estimator and Active Learning for the Wiener Model]{Optimal Bayesian Affine Estimator and \\ Active Learning for the Wiener Model}
\author[S. Vakili]{Sasan Vakili}
\author[M. {Mazo Jr.}]{Manuel {Mazo Jr.}}
\author[P. {Mohajerin Esfahani}]{Peyman {Mohajerin Esfahani}}
\thanks{The authors are with the Delft Center for Systems and Control, Delft University of Technology, Delft, The Netherlands. Emails: \href{mailto:S.Vakili@tudelft.nl} {\texttt{S.Vakili@tudelft.nl}}; \href{mailto:M.Mazo@tudelft.nl} {\texttt{M.Mazo@tudelft.nl}}; \href{mailto:P.MohajerinEsfahani@tudelft.nl} {\texttt{P.MohajerinEsfahani@tudelft.nl}}.
This work was supported by the European Union's Horizon 2020 Research and Innovation Programme through the Marie Sk\l{}odowska-Curie Grant under Agreement 956200 and the ERC Starting Grant TRUST-949796.}
\begin{document}
\begin{abstract}
This paper presents a Bayesian estimation framework for Wiener models, focusing on learning nonlinear output functions under known linear state dynamics. We derive a closed-form optimal affine estimator for the unknown parameters, characterized by the so-called ``dynamic basis statistics''~(DBS). Several features of the proposed estimator are studied, including Bayesian unbiasedness, closed-form posterior statistics, error monotonicity in trajectory length, and consistency condition (also known as persistent excitation). In the special case of Fourier basis functions, we demonstrate that the closed-form description is computationally available, as the Fourier DBS enjoys explicit expressions. Furthermore, we identify an inherent inconsistency in the Fourier bases for single-trajectory measurements, regardless of the input excitation. Leveraging the closed-form estimation error, we develop an active learning algorithm synthesizing input signals to minimize estimation error. Numerical experiments validate the efficacy of our approach, showing significant improvements over traditional regularized least-squares methods.
\end{abstract}
\maketitle
\textbf{Keywords: Active learning, Bayesian estimation, system identification, Wiener models}
\section{Introduction} \label{sec:Introduction}
System modelling and identification quality are essential for practical analysis, prediction, and control of complex phenomena across disciplines. Classical system identification techniques combine theoretical frameworks with empirical data to select appropriate model structures and estimate parameters for constructing accurate models of dynamic systems~\cite{soderstrom_system_1989, Ljung1999SysId, Deistler2001}. While linear system identification has a well-established foundation, identifying nonlinear systems remains an evolving area of research~\cite{Nelles2020}.

Parametric approaches for nonlinear system identification, such as extending linear state-space models to incorporate polynomial nonlinear terms~\cite{paduart2010identification}, have shown significant achievements~\cite{ljung2010perspectives}. However, these methods require a trade-off between bias and variance in model order selection~\cite{pillonetto2014kernel, schoukens2019nonlinear}. Recent regularization-based approaches address this challenge by exploring high-dimensional search spaces through kernel-based methods~\cite{chiuso2019system, pillonetto2022regularized}. These techniques enhance robustness in model selection by using continuous regularization parameters instead of discrete model orders. Furthermore, leveraging infinite-dimensional reproducing kernel Hilbert spaces via Gaussian process models~\cite{rasmussen_gaussian_2006} offers a probabilistic framework for nonlinear system identification~\cite{pillonetto2010new}, allowing prior knowledge about system properties, such as stability and smoothness, in the identification process.

While robust for system identification, kernel-based methods are computationally expensive for large-scale problems due to operations like matrix inversion in high dimensions. To address this challenge, randomized low-dimensional feature spaces, particularly Fourier basis functions, were introduced to accelerate kernel machine training~\cite{rahimi2007random}. Various techniques, including probabilistic variational methods~\cite{hensman2018variational}, have since been proposed to approximate radial basis kernels using random Fourier features~\cite{liu2021random}. Additionally, approximations have been developed to handle input noise in Gaussian process regression~\cite{girard2002gaussian, mchutchon2011gaussian}, enhancing applicability to real-world scenarios. This work addresses similar challenges for identifying an unknown nonlinear function influenced by time-varying correlated noise in its inputs. We propose an optimal Bayesian estimator for nonlinear functions represented as finite combinations of basis functions, focusing on Fourier bases.

Our problem can also be classified within the block-oriented nonlinear models, specifically Wiener systems characterized by a linear process followed by a static nonlinear observation model~\cite{schoukens2017identification}. Existing Wiener system identification techniques span a range of methodologies. Some approaches use Gaussian process models for the static nonlinear block and approximate posterior densities using Markov Chain Monte Carlo (MCMC) methods~\cite{lindsten2013bayesian, risuleo2020nonparametric}. Others model the static nonlinearity as a polynomial of known order or approximate it as a linear combination of predefined basis functions, employing the Prediction Error Method (PEM) or Gaussian sum filtering with Expectation-Maximization (EM) algorithms for inference~\cite{bottegai2017identification, cedeno2024identification, totterman2009support}. Additionally, studies on parameter estimation consistency using PEM and Maximum Likelihood under various noise assumptions~\cite{hagenblad1999aspects, hagenblad2008maximum} demonstrate that traditional least-squares methods can lead to biased estimates when process noise is present.

Unlike these existing techniques, our work focuses solely on the Bayesian estimation of static nonlinear observation parameters in Wiener systems with known linear time-varying dynamics affected by process and measurement noise. This problem arises in many applications, such as robot mapping in unknown environments. For instance, Autonomous Underwater Vehicles (AUVs) mapping the seabed in deep-sea environments face challenges due to the interplay between vehicle dynamics and unknown nonlinear seabed observation models. Our proposed Bayesian Minimum Mean Square Error (MMSE) affine estimator analytically computes estimates and estimation errors. By accounting for process noise correlations over time through information gained from the covariance of the observation model, our method avoids divergence issues observed in \emph{approximate prediction error method}.

While our Bayesian MMSE affine estimator addresses parameter estimation robustness against noise correlations, the quality of identification depends heavily on the choice of input signals used to excite the system. Active learning and optimal input design maximize information gain by selecting informative samples~\cite{settles2012active} or constructing input signals that optimize experimental criteria like Fisher or mutual information~\cite{fedorov1972theory, mehra1976synthesis}. Numerous studies have explored strategies for optimizing input selection for both linear~\cite{lindqvist2001identification, wagenmaker2020active} and nonlinear systems~\cite{gevers2012experiment, valenzuela2013optimal, valenzuela2017robust, mania2022active}, where different experimental design criteria have led to varied approaches. In this work, we derive an optimal input design by directly minimizing the analytical estimation error of our parameter estimates, specifically by minimizing the trace of the estimate covariance matrix. Integrating this optimal input design with our Bayesian MMSE affine estimator prior to conducting experiments enhances the efficacy of identifying static nonlinear observation parameters under correlated noise conditions.
\newpage 

\paragraph{\bf Contributions}
The main contributions are as follows:
\begin{itemize}
\item \textbf{Optimal Bayesian affine estimator:} 
Introducing a Bayesian setting, we derive the closed-form solution of the optimal affine estimator for the unknown parameters of the output function (Theorem~\ref{theorem:MMSE_LinearEstimator}), which is characterized in terms of the so-called ``\emph{dynamic basis statistics}'' (DBS).
\item \textbf{Optimal estimator features:} 
The proposed optimal estimator enjoys the following properties: (i) Bayesian unbiasedness (Proposition~\ref{proposition:Unbiasedness}), (ii) Closed-form updates for posterior statistics (Remark~\ref{remark:Posterior}), (iii) Monotonic error reduction (Corollary~\ref{corollary:Bayesian_Error_Reduction}), and (iv) Consistency under specific conditions (Proposition~\ref{proposition:Multi_Batch_Bayesian_MMSE_consistency}).
\item \textbf{Fourier basis explicit solution:} 
The generic closed-form solution of the Bayesian estimator requires the respective DBS of the basis functions, which can be computationally demanding. We show that these statistics admit explicit expressions in the special case of Fourier basis~(Lemma~\ref{lemma:Fourier_basis_Explicit_DBS}). We further identify an inherent inconsistency of single-trajectory measurements for the Fourier bases, irrespective of the input trajectory (even if unbounded and persistently excited), when the underlying dynamics are stochastically unstable~(Proposition~\ref{proposition:Bayesian_MMSE_consistency}).
\item \textbf{Active learning:} Leveraging the closed-form description of the estimation error, we propose a first-order algorithm to actively design an input signal that locally minimizes it.
\end{itemize}
The theoretical results are validated through extensive numerical experiments, demonstrating the superiority of our Bayesian estimator with active learning over classical regularized least-squares methods (cf.\,Figure~\ref{Fig:ErrorDiff_Histogram}). To facilitate reproducibility, we provide an open-source MATLAB library available at \href{https://github.com/sasanvakili/Bayesian4Wiener}{https://github.com/sasanvakili/Bayesian4Wiener}.
\paragraph{\bf Orgnaiziation}
Section~\ref{sec:Problem_Description} introduces the modelling setting and problem formulation. Section~\ref{sec:Solution_Approach} presents the solution approaches: classical regularized least-squares, Bayesian MMSE affine estimator and its properties. Section~\ref{sec:Fourier_Basis} provides explicit expressions for the special case of Fourier basis functions and discusses the consistency condition. Section~\ref{sec:Active_Learning} describes a first-order algorithm for actively learning input signals. Section~\ref{sec:Numerical_Experiments} outlines an experimental setup to examine the consistency condition and compares the proposed Bayesian affine estimator across four benchmarks. Detailed proofs of mathematical statements are provided in the ``Technical Proofs'' subsection of each corresponding section.
\paragraph{\bf Notation}
Throughout this paper, $\intZ_{+}$, $\Real$, $\Real_{+}$, $\Real^{n \times m}$, and $\Symm_{+}^{n}$ denote the set of positive integers, the real numbers, nonnegative real numbers, $n \times m$ real matrices, and the space of all symmetric positive semidefinite matrices in $\Real^{n \times n}$, respectively. The symbol $\eye$ refers to the identity matrix, vectors are represented with lowercase letters (e.g., $\phi$), while matrices are represented with uppercase letters (e.g., $\matrixPhi$). Subscripts denote elements of a vector or matrix (e.g., $\MuPhiElements{n}^{t}$ for a vector and $\SigmaPhiElements{mn}^{tt'}$ for a matrix), while superscripts represent the time index of vector or matrix elements (e.g., $\MuPhi^{t}$ for a vector and $\SigmaPhi^{tt'}$ for a matrix). The trace operator is denoted by $\trace$, the transpose of a matrix $\A$ is denoted by $\A^{\tr}\!$, and $\diag(\A_1,\ldots,\A_k)$ represents a block-diagonal matrix with diagonal entries $\A_1,\ldots,\A_k$. The inner product of two vectors~$\vecX$ and $\vecY$ is given by $\innerS{\vecX}{\vecY} = \vecX^{\tr}\vecY$, and the respective 2-norm is $\norm{\vecX} = \sqrt{\innerS{\vecX}{\vecX}}$. For a matrix $\A \in \Real^{n \times n}$, the largest (smallest) absolute value of eigenvalues is denoted by $\lambda_{\max}(\A)$ ($\lambda_{\min}(\A)$). The conic inequality $\A \preceq \B$ means that the matrix difference $\B - \A$ is positive semidefinite, i.e., $\B - \A \succeq 0$. The notation $\Pdist(\mu,\Sigma)$ refers to an arbitrary distribution with mean $\mu$ and covariance matrix $\Sigma$, while a multivariate normal (Gaussian) distribution is denoted by $\NormalDist(\mu,\Sigma)$ and a uniform distribution over $[a,b]$ is denoted by $\UniformDist(a,b)$. The symbol $\sim$ stands for \emph{“distributed according to”}.
\section{Problem description} \label{sec:Problem_Description}
Consider a \emph{known} discrete-time linear time-varying dynamical system where the states at time $t$ are observed through an \emph{unknown} observation model
\begin{equation}\label{eq:Dynamic_Observation_Models}
\begin{aligned}
\vecX_{t+1} & = \A_{t}\vecX_{t}+\B_{t}\vecU_t+\vecW_{t+1}, \\
\vecY_{t} & = \trueFunc(\vecX_{t})  + \vecV_{t}.
\end{aligned}
\end{equation}
Here, $t=\{0, \hdots, \Trajectory\}$ represents the time index starting from $0$ and ending at time $\Trajectory$, $\vecX_{t} \in \Real^{\nX}$ is the vector of state variables, $\A_{t} \in \Real^{\nX \times \nX}$ is the state transition matrix, $\vecU_{t} \in \Real^{\nU}$ is the vector of inputs, $\B_{t} \in \Real^{\nX \times \nU}$ is the input matrix, and $\vecW_{t+1} \in \Real^{\nX}$ is the process noise, which has a distribution given by $\Pdist(\zero, \SigmaW{t+1})$. Additionally, the initial state $\vecX_{0}$ is characterized by a mean vector $\MuX{0}$ and covariance matrix $\SigmaX{0}$. Observations are made through the scalar output measurements $\vecY_t \in \Real$, while $\vecV_t \in \Real$ represents the measurement noise with a distribution $\Pdist(0, \sigmaV{t}^{2})$. The output function $\trueFunc \colon \Real^{\nX} \to \Real$ is defined as a finite linear combination of \emph{known} basis functions $\phi_{n} \colon \Real^{\nX} \to \Complex$: 
\begin{equation} \label{eq:Function_Space}
\trueFunc(\vecX) = \!\sumOp_{n=0}^{\Ntheta} \vecTheta_{n} \phi_{n}(\vecX) = \innerS{\phi(\vecX)}{\vecTheta},
\end{equation} 
where $\vecTheta$~$=$~$[\vecTheta_{0}, \hdots, \vecTheta_{\Ntheta}]^{\tr}\!$ is the vector of \emph{unknown} parameters, $\phi(\vecX) = [\phi_{0}(\vecX), \hdots, \phi_{\Ntheta}(\vecX)]^{\tr}\!$ is the vector of basis functions evaluated at $\vecX$, and $\Ntheta+1$ is the number of basis functions. We note that the setting~\eqref{eq:Dynamic_Observation_Models}, i.e., linear dynamics followed by nonlinear output function, is referred to as the {\em Wiener} model.

\begin{remark} [\textbf{Modelling setting}]
\label{remark:Modeling_Setting}
Two important points are worth noting regarding the modelling setting of this study:
\begin{enumerate} [label=(\roman*)]
\item \textbf{Multivariate output measurements:} 
For measurements in higher dimensions ($\vecY_t \in \Real^{\nY}$), the output function~\eqref{eq:Function_Space} extends to a vector form $\trueFunc(\vecX) = [\trueFunc^{1}(\vecX), \hdots, \trueFunc^{\nY}(\vecX) ]^{\tr}\!$, where the goal is to learn each function $\trueFunc^{i}(\vecX)$ separately, parameterized as in ~\eqref{eq:Function_Space}. A common assumption in many applications is that the parameters of each $\trueFunc^{i}(\vecX)$ are independent of those of other components, effectively reducing the learning of a multivariate output function to multiple single-variate outputs. Therefore, for the simplicity of the exposition, we focus on single-output measurements ($\nY = 1$) for the remainder of this study.
\item \textbf{Bayesian prior interpretation:} 
A fundamental aspect of the Bayesian framework is incorporating prior information about unknown parameters. This information is formalized mathematically through a probability distribution, which in this study is characterized solely by its mean and covariance, i.e., the prior distribution belongs to $\Pdist(\MuTheta{}, \SigmaTheta{})$, where $\MuTheta{}$ and $\SigmaTheta{}$ are given modelling parameters.
\end{enumerate}
\end{remark}

Let the process noise $\vecW_t$, the measurement noise $\vecV_t$, the initial state $\vecX_0$, and the vector of parameters $\vecTheta$ be independent of one another at all times. Our objective is to identify the model parameters $\vecTheta$ from the measurement data $\vecY_{t}$ at all time steps, represented as $\vecYbar = [\vecY_{0}, \hdots, \vecY_{\Trajectory}]^{\tr}\!$. This task can also be interpreted as estimating the parameters of a linear or nonlinear function influenced by time-varying correlated noise in its inputs. The problem can be formally described as follows:

\begin{problem}[\textbf{Bayesian estimator for Wiener model}]\label{problem:Problem}
Given measurements $\vecYbar \in \Real^{\Trajectory+1}$ and the parameters prior information $\Pdist(\MuTheta{}, \SigmaTheta{})$, design the optimal estimator $\hat{\vecTheta} \colon \Real^{\Trajectory+1} \to \Theta$ that minimizes the expected loss, $\min_{\hat{\vecTheta}(\cdot)} \Expectation{\ell (\vecTheta, \hat{\vecTheta}(\vecYbar))}$, where $\ell$ is a predefined loss function quantifying the discrepancy between the true parameters $\vecTheta$ and their estimate $\hat{\vecTheta}(\vecYbar)$.
\end{problem}
\section{Solution Approaches} \label{sec:Solution_Approach}
The unknown function $\trueFunc (\vecX_{t})$ in the observation model of~\eqref{eq:Dynamic_Observation_Models} is assumed to belong exactly to the hypothesis class defined in~\eqref{eq:Function_Space}. Consequently, the output function can be reformulated and expressed in \emph{lifted matrix} form as
\begin{equation} \label{eq:Lifted_Observation}
\vecYbar = \matrixPhi^{\tr}\! \vecTheta + \vecVbar,
\end{equation}
where $\vecYbar = [\vecY_{0}, \hdots, \vecY_{\Trajectory}]^{\tr}\!$, $\matrixPhi = [\phi(\vecX_{0}), \hdots, \phi(\vecX_{\Trajectory})]$ is the basis aggregation matrix, $\vecTheta = [\vecTheta_{0}, \hdots, \vecTheta_{\Ntheta}]^{\tr}\!$, and $\vecVbar = [\vecV_{0}, \hdots, \vecV_{\Trajectory}]^{\tr}\!$ is the measurement noise vector. The measurement noise follows $\vecVbar \sim \Pdist(\zero, \SigmaVbar{})$, where $\SigmaVbar{} = \diag(\sigmaV{0}^{2}, \hdots, \sigmaV{\Trajectory}^{2})$ assuming $\vecV_{t}$ are independent from each other at all times. Let us recall that the prior information about $\vecTheta$ is characterized by a probability distribution defined by its mean and covariance, $\Pdist(\MuTheta{}, \SigmaTheta{})$ (cf.\,Remark~\ref{remark:Modeling_Setting}(ii)). The propagation of the state trajectory of the dynamical system through the output basis function is a key object in characterizing our proposed Bayesian estimator. This concept is introduced next.
\begin{definition} [\textbf{Dynamic basis statistics}]
\label{definition:Dynamic_Basis_Statistics}
Let $\vecX_{t}$ be the dynamics trajectory of the system~\eqref{eq:Dynamic_Observation_Models}, and $\phi(x)$ be the set of basis functions of the output function~\eqref{eq:Function_Space}. The dynamic basis statistics (DBS), denoted by $(\MuPhi^{t}, \SigmaPhi^{tt'})$, is the mean and covariance of~$\phi(\vecX_{t})$ at two time instants $(t,t')$, i.e.,
\begin{equation} \label{eq:Random_Phi}
\MuPhi^{t} = \Expectation{\phi(\vecX_{t})}, \qquad \SigmaPhi^{tt'} = \Expectation{\phi(\vecX_{t})\phi^{\tr}\!\!(\vecX_{t'})} - 
\Expectation{\phi(\vecX_{t})}\Expectation{\phi(\vecX_{t'})}^{\tr}\!.
\end{equation}
\end{definition}
Given the randomness of the elements of $\matrixPhi$ as defined in Definition~\ref{definition:Dynamic_Basis_Statistics}, the observation model can be rewritten as $\vecYbar = (\Expectation{\matrixPhi}+\DeltaPhi)^{\tr}\! \vecTheta + \vecVbar$, where $\DeltaPhi$ is a \emph{zero-mean} random matrix. If $\DeltaPhi$ were deterministic, however, a solution to Problem~\ref{problem:Problem} could be obtained via the classical least-squares methods of supervised learning, as discussed in the next subsection.
\subsection{Classical regularized least-squares} \label{subsec:Least_Square}
Regularized least squares (RLS), also called Ridge Regression, identifies unknown parameters by extending the ordinary least-squares method with a penalty on the parameters, known as the $\mathcal{L}_{2}$ regularization~\cite[Ch.~3]{bishop_pattern_2006}. This regularization reduces model complexity, helping to avoid overfitting and improving generalization to new, unseen data. Since $\matrixPhi$ in observation model~\eqref{eq:Lifted_Observation} is a random matrix, as noted in~\eqref{eq:Random_Phi}, one can \emph{approximate} its columns by $\matrixPhiApp = [\vecPhiApp(\vecX_{0}),\hdots,\vecPhiApp(\vecX_{\Trajectory})]$ in two ways:
\begin{enumerate} [label=(\roman*)]
\item Dead reckoning least squares (DLS): 
$\vecPhiApp(\vecX_{t}) = \big[\phi_{0}(\Expectation{\vecX_{t}}), \hdots, \phi_{\Ntheta}(\Expectation{\vecX_{t}})\big]^{\tr}\!$.
\item Mean least squares (MLS): 
$\vecPhiApp(\vecX_{t}) = \MuPhi^{t} = \big[\Expectation{\phi_{0}(\vecX_{t})}, \hdots, \Expectation{\phi_{\Ntheta}(\vecX_{t})} \big]^{\tr}\!$.
\end{enumerate}
\begin{subequations}\label{eq:Regularized_LS}
Using either of these approximations, the regularized least-squares method provides an estimator by solving the optimization problem
\begin{equation} \label{eq:Ridge_Regress}
\minOp_{\vecTheta} \, \norm{\vecYbar - \matrixPhiApp^{\tr}\!\vecTheta}^2 + \lambda \norm{\vecTheta}^2,
\end{equation}
where $\lambda$ is a hyperparameter. The closed-form optimal solution yields a linear estimator with respect to the measurements $\vecYbar$~\cite[Ch.~3]{bishop_pattern_2006} as
\begin{equation} \label{eq:Ridge_Regress_Solution}
\hat{\vecTheta}_{\mathrm{LS}}( \vecYbar ) = (\matrixPhiApp\matrixPhiApp^{\tr}\! + \lambda \eye )^{\inv}\! \matrixPhiApp \vecYbar.
\end{equation}
\end{subequations}
This estimator requires inverting a square matrix with dimensions equal to the number of basis functions. Consequently, the computational complexity of~\eqref{eq:Ridge_Regress_Solution} is $\Ocal_{\mathrm{LS}}(\Ntheta^{2}\Trajectory + \Ntheta^{3})$, depending on the number of measurements and unknown parameters. This complexity further simplifies to $\Ocal_{\mathrm{LS}}(\Trajectory)$ when $\Ntheta \ll \Trajectory$. These \emph{approximate} approaches are used to evaluate the performance of this method through numerical experiments in Section~\ref{sec:Numerical_Experiments}.
\subsection{Optimal Bayesian affine estimator} \label{subsec:Bayesian_Estimation}
Given the input-output trajectory data, the optimal Bayesian estimator $\hat{\vecTheta} \colon \Real^{\Trajectory+1} \to \Theta$, which addresses Problem~\ref{problem:Problem}, is obtained by solving
$$
\hat{\vecTheta} ( \vecYbar ) = \arg\!\min_{\vartheta \in \Theta} \Expectation{\ell ( \vecTheta, \vartheta )| \vecYbar} = \arg\!\min_{\vartheta \in \Theta} \int_{\Theta} \! \ell ( \vecTheta, \vartheta ) p(\vecYbar | \vecTheta) p(\drm \vecTheta),
$$
where the second equality follows from Bayes' rule. Here, $\ell(\vecTheta, \hat{\vecTheta}(\vecYbar))$ is a loss function that defines the performance criterion for estimating the unknown parameters using $\hat{\vecTheta}(\vecYbar)$. The term $p(\vecYbar|\vecTheta)$ represents the likelihood, i.e., the probability density function of the data given the unknown parameters, which is derived from the relationship between the data and the parameters. Meanwhile, $p(\drm \vecTheta)$ is the prior probability density function of the unknown parameters. Consequently, the Bayesian estimation approach requires both a performance criterion and a prior distribution as its starting point.

Among various performance criteria, the mean squared loss function, defined as $\ell(\vecTheta, \vartheta) = \norm{\vecTheta - \vartheta}^{2}$, leads to the Minimum Mean Square Error (MMSE) estimate. This estimate corresponds to the mean of the posterior distribution of the unknown parameters $\vecTheta$ given the measurements~\cite[Ch.~4]{levy2008principles},~i.e.,
\begin{equation} \label{eq:Cond_Exp}
\hat{\vecTheta} ( \vecYbar ) = \Expectation{\vecTheta | \vecYbar} = \int_{\Theta} \vecTheta p(\vecYbar | \vecTheta) p(\drm \vecTheta).
\end{equation}
However, computing the posterior solution~\eqref{eq:Cond_Exp} is often challenging due to either an incomplete specification of the likelihood distribution $p(\vecYbar|\vecTheta)$ or because the integral does not have a closed-form solution. For certain special classes of joint distributions, such as Gaussian distributions, the posterior distribution admits an analytical solution. A fundamental classical result in these cases is that the mean of the posterior belongs to the class of affine estimators~\cite[Ch.~2]{barfoot_state_2017}. The distribution $\Pdist$ of a random vector $\nu \in \Real^{\nf}$ is called \emph{elliptical}, denoted as $\Pdist = \mathcal{E}^{\nf}_{\rho}(\mu, \Sigma)$, if its characteristic function is given by $\characteristicFuncS{f} = \expS{j\innerS{f}{\mu}}\rho(f^{\tr}\Sigma f)$, where $\mu \in \Real^{\nf}$ is a location parameter, $\Sigma \in \Symm_{+}^{\nf}$ is the dispersion matrix, and $\rho: \Real_{+} \to \Real$ is the characteristic generator~\cite[p.~107]{johnson1987multivariate}.

\begin{remark} [\textbf{MMSE estimate of elliptical distributions}]
\label{remark:MMSE-EllipticalDistributions}
If the joint distribution of the unknown parameters $\vecTheta$ and the measurements $\vecYbar$ is elliptical, then the conditional distribution $p(\vecTheta | \vecYbar)$ is elliptical, and its first moment, forming the optimal solution~\eqref{eq:Cond_Exp}, is affine in the variable $\vecYbar$~\cite[Thm.~5]{cambanis1981theory}.
\end{remark}

Inspired by Remark~\ref{remark:MMSE-EllipticalDistributions} and to enhance computational efficiency, we restrict the family of estimators in Problem~\ref{problem:Problem} to affine functions for the remainder of this study. The following theorem presents the optimal closed-form solution for this class of estimators.

\begin{theorem} [\textbf{Optimal Bayesian MMSE affine estimator}] \label{theorem:MMSE_LinearEstimator}
The optimal Bayesian MMSE affine estimator of Problem~\ref{problem:Problem} is of the form $\hat{\vecTheta}_{\mathrm{B}}( \vecYbar ) = \Psi^{\star} \vecYbar + \psi^{\star}$, where 
\begin{subequations}\label{eq:opt_Bayes}
\begin{equation} \label{eq:MMSE_LinearEstimator}
\Psi^{\star} = \SigmaTheta{}\matrixPhibar \big( \matrixPhibar^{\tr}\!\SigmaTheta{}\matrixPhibar + \matrixM + \SigmaVbar{} \big)^{\inv}\!, \qquad \psi^{\star} = \MuTheta{} - \Psi^{\star} \matrixPhibar^{\tr}\! \MuTheta{},
\end{equation}
$\matrixPhibar = [\MuPhi^{0}, \hdots, \MuPhi^{\Trajectory}]$, the $(t+1,t'+1)^{\nth}$ element of matrix $\matrixM$ is $\matrixM_{tt'} = \trace \big(\SigmaPhi^{tt'}(\SigmaTheta{} + \MuTheta{}\MuTheta{}^{\tr}) \big)$, in which $(\MuPhi^{t}, \SigmaPhi^{tt'})$ is the respective DBS of $\phi(\vecX_{t})$ in the sense of Definition~\ref{definition:Dynamic_Basis_Statistics}. Consequently, the optimal MMSE estimation error is
\begin{equation} \label{eq:MMSE_OptimalCost}
\Cost^{\star}_{\mathrm{B}} = \trace \big(\SigmaTheta{} - \SigmaTheta{}\matrixPhibar \big( \matrixPhibar^{\tr}\!\SigmaTheta{}\matrixPhibar + \matrixM + \SigmaVbar{} \big)^{\inv}\!\matrixPhibar^{\tr}\!\SigmaTheta{} \big).
\end{equation}
\end{subequations}
\end{theorem}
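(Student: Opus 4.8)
The plan is to treat the affine-restricted version of Problem~\ref{problem:Problem} as a classical linear minimum mean square error problem, whose optimum is determined entirely by the first two joint moments of $\vecTheta$ and $\vecYbar$. Since the estimator class is restricted to $\hat{\vecTheta}_{\mathrm{B}}(\vecYbar) = \Psi\vecYbar + \psi$, the objective $\Expectation{\norm{\vecTheta - \Psi\vecYbar - \psi}^{2}}$ is jointly convex in $(\Psi,\psi)$. First I would eliminate $\psi$ through its stationarity condition, which forces $\psi = \MuTheta{} - \Psi\MuD$ with $\MuD = \Expectation{\vecYbar}$ and recentres the estimator as $\MuTheta{} + \Psi(\vecYbar - \MuD)$. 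Completing the square in $\Psi$ then yields
\begin{equation*}
\Expectation{\norm{\vecTheta - \Psi\vecYbar - \psi}^{2}}
= \trace\!\big( (\Psi - \SigmaThetaD\SigmaDD^{\inv})\,\SigmaDD\,(\Psi - \SigmaThetaD\SigmaDD^{\inv})^{\tr} \big)
+ \trace\!\big( \SigmaThetaTheta - \SigmaThetaD\SigmaDD^{\inv}\SigmaDTheta \big),
\end{equation*}
where $\SigmaThetaTheta = \SigmaTheta{}$, $\SigmaThetaD = \Covariance{\vecTheta}{\vecYbar}$, $\SigmaDTheta = \SigmaThetaD^{\tr}$, and $\SigmaDD = \Covariance{\vecYbar}{\vecYbar} \succeq 0$. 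As the first trace is nonnegative and vanishes at $\Psi^{\star} = \SigmaThetaD\SigmaDD^{\inv}$, this $\Psi^{\star}$ is the minimiser, $\psi^{\star} = \MuTheta{} - \Psi^{\star}\MuD$, and the residual second trace is the optimal error $\Cost^{\star}_{\mathrm{B}}$.

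It then remains to evaluate the three moments $\MuD$, $\SigmaThetaD$, and $\SigmaDD$ for the lifted model $\vecYbar = \matrixPhi^{\tr}\vecTheta + \vecVbar$. I would use the splitting $\matrixPhi = \matrixPhibar + \DeltaPhi$ with $\matrixPhibar = \Expectation{\matrixPhi} = [\MuPhi^{0}, \hdots, \MuPhi^{\Trajectory}]$ and $\DeltaPhi$ zero-mean, together with the mutual independence of $\vecTheta$, of $\DeltaPhi$ (a function of the state trajectory only), and of the zero-mean $\vecVbar$. Taking expectations gives $\MuD = \matrixPhibar^{\tr}\MuTheta{}$, so that $\psi^{\star} = \MuTheta{} - \Psi^{\star}\matrixPhibar^{\tr}\MuTheta{}$ as claimed. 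For the cross-covariance, expanding $\SigmaThetaD = \Expectation{(\vecTheta - \MuTheta{})(\matrixPhibar^{\tr}(\vecTheta - \MuTheta{}) + \DeltaPhi^{\tr}\vecTheta + \vecVbar)^{\tr}}$ and discarding the last two terms (the $\DeltaPhi$ term vanishes because $\Expectation{\DeltaPhi} = 0$ and $\DeltaPhi \perp \vecTheta$, the $\vecVbar$ term because $\vecVbar$ is independent and zero-mean) leaves $\SigmaThetaD = \SigmaTheta{}\matrixPhibar$, which already produces the stated $\Psi^{\star}$ once $\SigmaDD$ is identified.

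The main obstacle is the output covariance $\SigmaDD$. Writing $\vecYbar - \MuD = \matrixPhibar^{\tr}(\vecTheta - \MuTheta{}) + \DeltaPhi^{\tr}\vecTheta + \vecVbar$, every cross-term vanishes by independence and the zero-mean property, leaving the three diagonal blocks $\SigmaDD = \matrixPhibar^{\tr}\SigmaTheta{}\matrixPhibar + \Expectation{\DeltaPhi^{\tr}\vecTheta\vecTheta^{\tr}\DeltaPhi} + \SigmaVbar{}$. The delicate middle block couples the two independent random objects $\DeltaPhi$ and $\vecTheta$, and I would evaluate it entrywise: its $(t+1,t'+1)$ entry is $\Expectation{\innerS{\deltaPhi(\vecX_{t})}{\vecTheta}\innerS{\deltaPhi(\vecX_{t'})}{\vecTheta}} = \Expectation{\vecTheta^{\tr}\Expectation{\deltaPhi(\vecX_{t})\deltaPhi(\vecX_{t'})^{\tr}}\vecTheta}$, where conditioning on $\vecTheta$ and independence justify pulling the inner expectation inside. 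That inner expectation is exactly $\SigmaPhi^{tt'}$ by Definition~\ref{definition:Dynamic_Basis_Statistics}, and the scalar-quadratic identity $\Expectation{\vecTheta^{\tr}\!A\vecTheta} = \trace\!\big(A\,\Expectation{\vecTheta\vecTheta^{\tr}}\big)$ with $\Expectation{\vecTheta\vecTheta^{\tr}} = \SigmaTheta{} + \MuTheta{}\MuTheta{}^{\tr}$ turns the entry into $\trace\!\big(\SigmaPhi^{tt'}(\SigmaTheta{} + \MuTheta{}\MuTheta{}^{\tr})\big) = \matrixM_{tt'}$. Hence $\SigmaDD = \matrixPhibar^{\tr}\SigmaTheta{}\matrixPhibar + \matrixM + \SigmaVbar{}$, and substituting $\SigmaThetaD$ and $\SigmaDD$ into the minimiser and residual from the first paragraph reproduces~\eqref{eq:MMSE_LinearEstimator} and~\eqref{eq:MMSE_OptimalCost}.
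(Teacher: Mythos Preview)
Your proof is correct and follows essentially the same strategy as the paper: both reduce the affine MMSE problem to computing the joint first and second moments of $(\vecTheta,\vecYbar)$, exploit the mutual independence of $\vecTheta$, $\DeltaPhi$, and $\vecVbar$, and evaluate the quadratic block $\Expectation{\DeltaPhi^{\tr}\vecTheta\vecTheta^{\tr}\DeltaPhi}$ entrywise via the trace identity to obtain $\matrixM$. The only cosmetic differences are that you complete the square and invoke the standard LMMSE formula directly, whereas the paper expands the cost, sets the derivative in $\Psi$ to zero, and splits $\vecTheta = \MuTheta{} + \DeltaTheta$ so that $\matrixM$ appears as the sum of two intermediate matrices $\matrixH+\matrixV$ before recombining.
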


We note that the technical proofs of the theoretical results are provided in Section~\ref{subsec:Technical_Proof_Sec3}. The computational complexity of the optimal Bayesian MMSE affine estimator is
$\Ocal_{\mathrm{B}}(\Ntheta^{2}\Trajectory^{2}+\Trajectory^{3})$, depending on the number of measurements and unknown parameters. When $\Ntheta \ll \Trajectory$, the computational complexity simplifies to $\Ocal_{\mathrm{B}}(\Trajectory^{3})$, which is significantly higher than that of the DLS and MLS linear estimators discussed in Section~\ref{subsec:Least_Square}. While computationally more expensive, the Bayesian MMSE estimator accounts for process noise correlations over time and provides unbiased estimates relative to the prior.

\begin{proposition} [\textbf{Bayesian unbiasedness}] \label{proposition:Unbiasedness}
The optimal Bayesian MMSE affine estimator~\eqref{eq:MMSE_LinearEstimator} is Bayesian unbiased, i.e., $\Expectation{\vecTheta-\hat{\vecTheta}_{\mathrm{B}}(\vecYbar)} = 0$, where $\vecTheta$ represents the true unknown parameters and the expectation is taken with respect to the joint distribution of $(\vecTheta, \vecYbar)$.
\end{proposition}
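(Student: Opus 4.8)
The plan is to compute the unconditional first moment $\Expectation{\hat{\vecTheta}_{\mathrm{B}}(\vecYbar)}$ directly and show it equals the prior mean $\MuTheta{}$, which is precisely $\Expectation{\vecTheta}$; the claim then follows by linearity of expectation. Since $\hat{\vecTheta}_{\mathrm{B}}(\vecYbar) = \Psi^{\star}\vecYbar + \psi^{\star}$ is affine, the only nontrivial ingredient is the mean $\Expectation{\vecYbar}$ of the measurements under the joint law of $(\vecTheta, \vecYbar)$.

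First I would invoke the lifted observation model $\vecYbar = \matrixPhi^{\tr}\!\vecTheta + \vecVbar$ from~\eqref{eq:Lifted_Observation} and take expectations. Using the standing independence assumptions — the parameters $\vecTheta$ are independent of the state trajectory (hence of $\matrixPhi$, which is a deterministic function of $\vecX_0$ and the process noise $\vecW_t$) and of the measurement noise $\vecVbar$ — the product expectation factorizes as $\Expectation{\matrixPhi^{\tr}\!\vecTheta} = \Expectation{\matrixPhi^{\tr}}\Expectation{\vecTheta}$. Since $\vecVbar$ is zero-mean and $\Expectation{\matrixPhi} = [\MuPhi^{0}, \hdots, \MuPhi^{\Trajectory}] = \matrixPhibar$ by Definition~\ref{definition:Dynamic_Basis_Statistics}, this yields $\Expectation{\vecYbar} = \matrixPhibar^{\tr}\!\MuTheta{}$.

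Substituting into the affine estimator and using the closed form $\psi^{\star} = \MuTheta{} - \Psi^{\star}\matrixPhibar^{\tr}\!\MuTheta{}$ from~\eqref{eq:MMSE_LinearEstimator}, I would obtain $\Expectation{\hat{\vecTheta}_{\mathrm{B}}(\vecYbar)} = \Psi^{\star}\matrixPhibar^{\tr}\!\MuTheta{} + \psi^{\star} = \MuTheta{}$, where the two $\Psi^{\star}\matrixPhibar^{\tr}\!\MuTheta{}$ terms cancel. Because $\Expectation{\vecTheta} = \MuTheta{}$ as well, linearity of expectation gives $\Expectation{\vecTheta - \hat{\vecTheta}_{\mathrm{B}}(\vecYbar)} = 0$.

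I do not anticipate a genuine obstacle: the result is essentially a consequence of the bias-correction structure of $\psi^{\star}$, which was engineered so that the constant term absorbs exactly the contribution $\Psi^{\star}\Expectation{\vecYbar}$. The only point requiring care is the factorization $\Expectation{\matrixPhi^{\tr}\!\vecTheta} = \matrixPhibar^{\tr}\!\MuTheta{}$, which rests on the independence of $\vecTheta$ from the dynamics; this is guaranteed by the problem's standing assumption that $\vecTheta$, $\vecX_0$, $\vecW_t$, and $\vecV_t$ are mutually independent. Notably, this argument never uses the specific expression for $\Psi^{\star}$, so unbiasedness holds for \emph{any} affine estimator whose intercept is set to $\MuTheta{} - \Psi\matrixPhibar^{\tr}\!\MuTheta{}$.
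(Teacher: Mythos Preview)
Your proposal is correct and follows essentially the same route as the paper's proof: both compute $\Expectation{\vecYbar} = \matrixPhibar^{\tr}\!\MuTheta{}$ via the independence of $\vecTheta$ from $\matrixPhi$ and the zero-mean noise, then use the form $\psi^{\star} = \MuTheta{} - \Psi^{\star}\matrixPhibar^{\tr}\!\MuTheta{}$ to cancel the $\Psi^{\star}$ terms. Your closing remark that unbiasedness holds for any affine estimator with this intercept (independently of $\Psi^{\star}$) is a correct additional observation not made explicit in the paper.
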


Given that the estimator is unbiased with respect to the prior, one can update the prior with the result obtained from this MMSE estimation.

\begin{remark} [\textbf{Bayesian update via posterior distribution}] 
\label{remark:Posterior}
Using the measurements $\vecYbar$ and the proposed Bayesian MMSE affine estimator, the unknown parameters follow a posterior distribution, i.e., $\vecTheta | \vecYbar \sim \Pdist(\MuTheta{}^{^{\mathrm{pos}}}\!, \SigmaTheta{}^{^{\mathrm{pos}}} )$, with known mean and covariance, given by
\begin{equation} \label{eq:Param_Estimate_Dist}
\begin{cases}
\MuTheta{}^{^{\mathrm{pos}}} \!= \MuTheta{} + \Psi^{\star} (\vecYbar- \matrixPhibar^{\tr}\! \MuTheta{}), \\
\SigmaTheta{}^{^{\mathrm{pos}}} \!= \SigmaTheta{} - \SigmaTheta{} \matrixPhibar \big(\matrixPhibar^{\tr}\!\SigmaTheta{}\matrixPhibar + \matrixM + \SigmaVbar{} \big)^{\inv}\! \matrixPhibar^{\tr}\! \SigmaTheta{}. 
\end{cases}
\end{equation}
\end{remark}

The proposed optimal affine estimator achieves the \emph{minimum variance} among all affine estimators and reduces the variance relative to the prior distribution, i.e., $\SigmaTheta{} \succeq \SigmaTheta{}^{^{\mathrm{pos}}}$, as evident from~\eqref{eq:Param_Estimate_Dist}.

\begin{corollary} [\textbf{Bayesian error monotonicity}] \label{corollary:Bayesian_Error_Reduction}
Let $\Cost^{\star}_{\mathrm{B}}(t)$ be the optimal MMSE estimation error defined in~\eqref{eq:opt_Bayes} at time $t$ corresponding the measurement vector $\vecYbar = [\vecY_{0}, \hdots, \vecY_{t}]^{\tr}\!$. Then, with one extra measurement at time $(t+1)$, the estimation error decreases monotonically, i.e., $\Cost^{\star}_{\mathrm{B}}(t+1) \leq \Cost^{\star}_{\mathrm{B}}(t)$.
\end{corollary}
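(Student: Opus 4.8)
The plan is to prove the stronger matrix inequality $\SigmaTheta{}^{^{\mathrm{pos}}}(t+1) \preceq \SigmaTheta{}^{^{\mathrm{pos}}}(t)$ between the posterior covariances of Remark~\ref{remark:Posterior}, from which the claim follows immediately: comparing~\eqref{eq:MMSE_OptimalCost} with the second line of~\eqref{eq:Param_Estimate_Dist} shows that $\Cost^{\star}_{\mathrm{B}}(t) = \trace(\SigmaTheta{}^{^{\mathrm{pos}}}(t))$, and since the trace is an order-preserving map on the positive semidefinite cone, $A \preceq B$ forces $\trace(A) \le \trace(B)$. Writing $\matrixPhibar(t) = [\MuPhi^{0},\hdots,\MuPhi^{t}]$ and $\matrixS(t) = \matrixPhibar(t)^{\tr}\SigmaTheta{}\matrixPhibar(t) + \matrixM(t) + \SigmaVbar{}(t)$, so that $\SigmaTheta{}^{^{\mathrm{pos}}}(t) = \SigmaTheta{} - \SigmaTheta{}\matrixPhibar(t)\matrixS(t)^{\inv}\matrixPhibar(t)^{\tr}\SigmaTheta{}$, the step from $t$ to $t+1$ appends the column $\MuPhi^{t+1}$ to $\matrixPhibar(t)$ and a single row and column to $\matrixS(t)$, giving the bordered form
\[
\matrixS(t+1) = \begin{pmatrix} \matrixS(t) & b \\ b^{\tr} & d \end{pmatrix},
\]
where $b$ gathers the time-$(t+1)$ cross terms and $d$ is the new diagonal entry. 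This bordering (rather than an additive information-matrix update) is the structural reason a Schur-complement argument is the right tool.

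The preliminary—and conceptually central—step is to certify that $\matrixS(t) \succ 0$, so that its Schur complement $s := d - b^{\tr}\matrixS(t)^{\inv}b$ is strictly positive. I would split $\matrixS(t)$ into its three summands: $\matrixPhibar(t)^{\tr}\SigmaTheta{}\matrixPhibar(t)\succeq 0$ since $\SigmaTheta{}\succeq 0$, and $\SigmaVbar{}(t)\succ 0$ as a diagonal matrix of positive noise variances, while $\matrixM(t)\succeq 0$ because it is exactly the covariance matrix of the zero-mean random vector $\DeltaPhi^{\tr}\vecTheta$. Indeed, using the independence of $\vecTheta$ and the state trajectory together with $\Expectation{\Delta\phi(\vecX_{t})}=0$, the entry $\matrixM_{tt'} = \trace(\SigmaPhi^{tt'}(\SigmaTheta{}+\MuTheta{}\MuTheta{}^{\tr}))$ equals $\Expectation{(\Delta\phi(\vecX_{t})^{\tr}\vecTheta)(\Delta\phi(\vecX_{t'})^{\tr}\vecTheta)}$, i.e.\ the $(t,t')$ Gram entry of $\DeltaPhi^{\tr}\vecTheta$. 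Hence $\matrixS(t)\succeq\SigmaVbar{}(t)\succ 0$.

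With $\matrixS(t)\succ 0$ in hand, I would apply the standard bordered-inverse formula to $\matrixS(t+1)^{\inv}$ and expand $\matrixPhibar(t+1)\matrixS(t+1)^{\inv}\matrixPhibar(t+1)^{\tr}$. After grouping, the four resulting blocks collapse into a single correction term:
\[
\matrixPhibar(t+1)\matrixS(t+1)^{\inv}\matrixPhibar(t+1)^{\tr} = \matrixPhibar(t)\matrixS(t)^{\inv}\matrixPhibar(t)^{\tr} + u\,s^{\inv}u^{\tr}, \qquad u := \matrixPhibar(t)\matrixS(t)^{\inv}b - \MuPhi^{t+1}.
\]
Since $s>0$ we have $u\,s^{\inv}u^{\tr}\succeq 0$, whence
\[
\SigmaTheta{}^{^{\mathrm{pos}}}(t) - \SigmaTheta{}^{^{\mathrm{pos}}}(t+1) = \SigmaTheta{}\,u\,s^{\inv}u^{\tr}\,\SigmaTheta{} = (\SigmaTheta{}u)\,s^{\inv}(\SigmaTheta{}u)^{\tr} \succeq 0,
\]
using $\SigmaTheta{}^{\tr}=\SigmaTheta{}$. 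Taking traces yields $\Cost^{\star}_{\mathrm{B}}(t+1)\le\Cost^{\star}_{\mathrm{B}}(t)$.

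I expect the main obstacle to lie in two routine-but-delicate places. First, the positive semidefiniteness of $\matrixM(t)$: the clean route is the covariance interpretation above rather than a direct spectral estimate, and one should note that although the bases $\phi_{n}$ may be complex, the scalar $\Delta\phi(\vecX_{t})^{\tr}\vecTheta$ is a real fluctuation of the real output, so $\matrixM(t)$ is a genuine real covariance. Second, the block-inversion bookkeeping must be carried out so that the three cross-terms telescope exactly into $u\,s^{\inv}u^{\tr}$; everything else—the reduction to the trace inequality and the sign of the correction term—is immediate.
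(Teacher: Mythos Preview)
Your proof is correct and rests on the same Schur-complement/bordered-inverse mechanism as the paper, with one technical variation worth flagging. The paper first applies the matrix inversion lemma to recast the posterior covariance in information form, $(\SigmaTheta{}^{\inv}+\matrixPhibar\matrixR^{\inv}\matrixPhibar^{\tr})^{\inv}$ with $\matrixR=\matrixM+\SigmaVbar{}$, and then performs the bordered-inverse decomposition on $\matrixR(t+1)$; this route requires $\SigmaTheta{}$ to be invertible. You skip the Woodbury detour and apply the bordered-inverse formula directly to the full matrix $\matrixPhibar(t)^{\tr}\SigmaTheta{}\matrixPhibar(t)+\matrixM(t)+\SigmaVbar{}(t)$, never inverting $\SigmaTheta{}$, which is marginally more general and slightly more direct. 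Your covariance reading of $\matrixM$ as the Gram matrix of the real scalar process $\DeltaPhi^{\tr}\vecTheta$ is also a clean justification of $\matrixM\succeq 0$, a fact the paper uses but does not spell out in this proof. Beyond that the two arguments coincide: the same rank-one correction $u\,s^{\inv}u^{\tr}$ emerges with the same sign, and the trace inequality follows.
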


In the following section, we present Fourier basis functions as a particular case of our generic solution and further illustrate the efficacy and performance of that through numerical analyses in Section~\ref{sec:Numerical_Experiments}.
\subsection{Technical Proofs} \label{subsec:Technical_Proof_Sec3}
This subsection contains detailed proofs of the theoretical results introduced earlier.
\begin{proof}[Proof of Theorem~\ref{theorem:MMSE_LinearEstimator}] 
\label{proof:MMSE_LinearEstimator}
Let us denote $\matrixPhi = \matrixPhibar + \DeltaPhi$, where $\matrixPhibar = \Expectation{\matrixPhi}$ and $\DeltaPhi$ is a \emph{zero-mean} random matrix. Specifically, $\matrixPhibar = [\MuPhi^{0}, \hdots, \MuPhi^{\Trajectory}]$, where $\MuPhi^{t} = \Expectation{\phi(\vecX_{t})}$, as defined by the DBS in~\eqref{eq:Random_Phi}. Similarly, $\vecTheta = \MuTheta{} + \DeltaTheta$ with $\DeltaTheta \sim \Pdist(\zero, \SigmaTheta{})$. Expanding the MMSE, $\minOp_{\Psi,\, \psi}\, \Expectation{\norm{\vecTheta - \Psi \vecYbar - \psi}^2}$, replacing $\vecYbar$ with its model from~\eqref{eq:Lifted_Observation}, and decomposing $\matrixPhi$ and $\vecTheta$ results in
\begin{equation} \label{eq:MMSE_Expanded}
\begin{aligned} 
\minOp_{\Psi,\, \psi} \,\,
& \mathbb{E}\Big[
\big(\DeltaTheta - \Psi (\matrixPhibar+\DeltaPhi )^{\tr}\!\!\DeltaTheta - \Psi\vecVbar - \Psi\DeltaPhi^{\tr}\!\MuTheta{} \big)^{\!\!\tr}\! 
\big(\DeltaTheta - \Psi (\matrixPhibar+\DeltaPhi )^{\tr}\!\!\DeltaTheta - \Psi\vecVbar - \Psi\DeltaPhi^{\tr}\!\MuTheta{} \big)
\Big] \\
& -2 \mathbb{E}\Big[
\big(\DeltaTheta - \Psi ( \matrixPhibar+\DeltaPhi )^{\tr}\!\!\DeltaTheta - \Psi\vecVbar - \Psi \DeltaPhi^{\tr}\!\MuTheta{} \big)^{\!\!\tr}\! 
\big(\psi - \MuTheta{} + \Psi \matrixPhibar^{\tr}\! \MuTheta{} \big) 
\Big] \\
& + \mathbb{E}\Big[
\big(\psi - \MuTheta{} + \Psi\matrixPhibar^{\tr}\!\MuTheta{} \big)^{\!\!\tr}\!
\big(\psi - \MuTheta{} + \Psi\matrixPhibar^{\tr}\!\MuTheta{} \big)
\Big].
\end{aligned}
\end{equation}
The second term in the above optimization is zero because all random variables are zero-mean, i.e., $\Expectation{\DeltaTheta} = \zero$, $\Expectation{\DeltaPhi} = \zeromx$, $\Expectation{\vecVbar} = \zero$, $\Expectation{\DeltaTheta^{\tr}\!\DeltaPhi} = 0$ from the stochastic independence of $\vecTheta$ and $\vecW_t$ at all times, and $\psi - \MuTheta{} + \Psi\matrixPhibar^{\tr}\!\MuTheta{}$ is a constant. In addition, the last term in~\eqref{eq:MMSE_Expanded} is zero if $\psi = \MuTheta{} - \Psi\matrixPhibar^{\tr}\!\MuTheta{}$.
Therefore, minimizing~\eqref{eq:MMSE_Expanded} over the variable $\Psi$ results in 
$$
\psi^{\star} = \MuTheta{} - \Psi^{\star}\matrixPhibar^{\tr}\!\MuTheta{}.
$$
Expanding the first term of~\eqref{eq:MMSE_Expanded}, applying the trace operator, noting that all the random variables are independent, and after some algebraic manipulation, the problem~\eqref{eq:MMSE_Expanded} reduces to minimizing the following cost function over $\Psi$:
$$
\Cost(\Psi) = \trace \big(\SigmaTheta{} - 2 \Psi^{\tr}\!\SigmaTheta{}\matrixPhibar + \Psi^{\tr}\!\Psi(\matrixPhibar^{\tr}\!\SigmaTheta{}\matrixPhibar + \matrixH + \matrixV + \SigmaVbar{})\big),
$$
where $\matrixH = \Expectation{\DeltaPhi^{\tr}\!\DeltaTheta\DeltaTheta^{\tr}\!\DeltaPhi}$ and $\matrixV = \Expectation{\DeltaPhi^{\tr}\!\MuTheta{}\MuTheta{}^{\tr}\DeltaPhi}$. The minimum of $\Cost(\Psi)$ is obtained from setting its partial derivative to zero, hence,
\begin{align*}
\dfrac{\partial \Cost}{\partial \Psi} \Big|_{\Psi=\Psi^{\star}} & \!\!= -2\SigmaTheta{}\matrixPhibar+2\Psi^{\star} \big(\matrixPhibar^{\tr}\!\SigmaTheta{}\matrixPhibar + \matrixH + \matrixV + \SigmaVbar{} \big) = 0 \\
& \Longleftrightarrow \Psi^{\star} = \SigmaTheta{}\matrixPhibar \big( \matrixPhibar^{\tr}\!\SigmaTheta{}\matrixPhibar + \matrixH + \matrixV + \SigmaVbar{} \big)^{\inv}\!\!.
\end{align*}
Since $\SigmaVbar{} = \diag(\sigmaV{0}^{2}, \hdots, \sigmaV{\Trajectory}^{2})$, we have $\matrixPhibar^{\tr}\!\SigmaTheta{}\matrixPhibar + \matrixH + \matrixV + \SigmaVbar{} \succ 0$, i.e., that the above matrix is invertible and $\Psi^{\star}$ has a unique solution. It remains to find the elements of matrices $\matrixH$ and $\matrixV$. The $(t+1,t'+1)^{\nth}$ elements of matrices $\matrixH$ and $\matrixV$ is calculated based on the $(t+1)^{\nth}$ and $(t'+1)^{\nth}$ columns of matrix $\DeltaPhi$ defined as $\deltaPhi(\vecX_{t})$ and $\deltaPhi(\vecX_{t'})$, respectively. Applying the trace operator to each of their elements,
$$
\begin{aligned}
\matrixH_{tt'} & = \Expectation{\deltaPhi^{\tr}\!\!(\vecX_{t})\DeltaTheta\DeltaTheta^{\tr}\!\deltaPhi(\vecX_{t'})} = \bigExpectation{\trace \big(\deltaPhi^{\tr}\!\!(\vecX_{t})\DeltaTheta\DeltaTheta^{\tr}\!\deltaPhi(\vecX_{t'}) \big)} = \bigExpectation{\trace \big(\deltaPhi(\vecX_{t'})\deltaPhi^{\tr}\!\!(\vecX_{t})\DeltaTheta\DeltaTheta^{\tr} \big)}, \\
\matrixV_{\!tt'} & = \Expectation{\deltaPhi^{\tr}\!\!(\vecX_{t})\MuTheta{}\MuTheta{}^{\tr}\deltaPhi(\vecX_{t'})} = \bigExpectation{\trace \big(\deltaPhi^{\tr}\!\!(\vecX_{t})\MuTheta{}\MuTheta{}^{\tr}\deltaPhi(\vecX_{t'}) \big)} = \bigExpectation{\trace \big(\deltaPhi(\vecX_{t'})\deltaPhi^{\tr}\!\!(\vecX_{t})\MuTheta{}\MuTheta{}^{\tr} \big)}.
\end{aligned}
$$
Noting that $\deltaPhi(\vecX_{t'})\deltaPhi^{\tr}\!\!(\vecX_{t})$ and $\DeltaTheta$ are independent, one could observe that
\begin{align*}
\matrixH_{tt'} 
& = \trace \big( \bigExpectation{\deltaPhi(\vecX_{t'})\deltaPhi^{\tr}\!\!(\vecX_{t})}\SigmaTheta{} \big), \\
\matrixV_{\!tt'} 
& = \trace \big( \bigExpectation{\deltaPhi(\vecX_{t'})\deltaPhi^{\tr}\!\!(\vecX_{t})}\MuTheta{}\MuTheta{}^{\tr} \big).
\end{align*}
Finally, rewriting $\deltaPhi(\vecX_{t}) = \phi(\vecX_{t}) - \Expectation{\phi(\vecX_{t})}$, it is straightforward to obtain
$$
\Expectation{\deltaPhi(\vecX_{t'})\deltaPhi^{\tr}\!\!(\vecX_{t})} 
= \Expectation{\phi(\vecX_{t'})\phi^{\tr}\!\!(\vecX_{t})} - \Expectation{\phi(\vecX_{t'})}\Expectation{\phi(\vecX_{t})}^{\tr}\!,
$$
hence, we have
$$
\begin{aligned}
\matrixH_{tt'} & = \trace \big( (\Expectation{\phi(\vecX_{t'})\phi^{\tr}\!\!(\vecX_{t})} - \Expectation{\phi(\vecX_{t'})}\Expectation{\phi(\vecX_{t})}^{\tr})\SigmaTheta{} \big), \\
\matrixV_{\!tt'} & = \trace \big( (\Expectation{\phi(\vecX_{t'})\phi^{\tr}\!\!(\vecX_{t})} - \Expectation{\phi(\vecX_{t'})}\Expectation{\phi(\vecX_{t})}^{\tr} )\MuTheta{}\MuTheta{}^{\tr} \big).
\end{aligned}
$$
Introducing matrix $\matrixM \coloneqq \matrixH + \matrixV$, its $(t+1,t'+1)^{\nth}$ element is 
$$
\matrixM_{tt'} = \matrixH_{tt'}+\matrixV_{\!tt'} = \trace \big(\SigmaPhi^{t't}(\SigmaTheta{} + \MuTheta{}\MuTheta{}^{\tr}) \big),
$$
where $\SigmaPhi^{t't} = \Expectation{\phi(\vecX_{t'})\phi^{\tr}\!\!(\vecX_{t})} - \Expectation{\phi(\vecX_{t'})}\Expectation{\phi(\vecX_{t})}^{\tr}\!$. Since 
$$
\trace \big(\SigmaPhi^{t't}(\SigmaTheta{} + \MuTheta{}\MuTheta{}^{\tr}) \big) = \trace \big(\SigmaPhi^{tt'}(\SigmaTheta{} + \MuTheta{}\MuTheta{}^{\tr})\big),
$$
we write $\matrixM_{tt'} = \trace \big(\SigmaPhi^{tt'}(\SigmaTheta{} + \MuTheta{}\MuTheta{}^{\tr}) \big)$. Ultimately, substituting $\Psi^{\star} = \SigmaTheta{}\matrixPhibar \big( \matrixPhibar^{\tr}\!\SigmaTheta{}\matrixPhibar + \matrixM + \SigmaVbar{}\big)^{\inv}\!$ and $\psi^{\star} = \MuTheta{} - \Psi^{\star}\matrixPhibar^{\tr}\!\MuTheta{}$ in~\eqref{eq:MMSE_Expanded} results in the last two terms to be zero and the optimal MMSE error $\Cost^{\star}_{\mathrm{B}}$ after a simple algebraic manipulation arrives at~\eqref{eq:MMSE_OptimalCost}, which concludes the proof.
\end{proof}
\begin{proof}[Proof of Proposition~\ref{proposition:Unbiasedness}]
\label{proof:Unbiasedness}
Showing that $\Expectation{\vecTheta - \hat{\vecTheta}_{\mathrm{B}}(\vecYbar)} = \zero$ indicates that the estimator is Bayesian unbiased. We substitute $\hat{\vecTheta}_{\mathrm{B}}(\vecYbar)$ with its optimal estimator~\eqref{eq:MMSE_LinearEstimator}, which leads to
$$
\begin{aligned}
\Expectation{\vecTheta-\hat{\vecTheta}_{\mathrm{B}}(\vecYbar)} = \Expectation{\vecTheta} - \Expectation{\Psi^{\star} \vecYbar + \psi^{\star}} 
= \MuTheta{} - \Psi^{\star}\Expectation{\vecYbar} - \MuTheta{} + \Psi^{\star} \matrixPhibar^{\tr}\! \MuTheta{}
= \Psi^{\star} \big(\matrixPhibar^{\tr}\! \MuTheta{} - \Expectation{\vecYbar}\big).
\end{aligned}
$$
Noting that $\Expectation{\vecYbar} = \Expectation{\matrixPhi}\MuTheta{}$ from the stochastic independence of $\vecTheta$ and $\vecW_t$ at all times, and that $\Expectation{\matrixPhi} = \matrixPhibar$, it follows that $\Expectation{\vecTheta - \hat{\vecTheta}_{\mathrm{B}}(\vecYbar)} = \zero$.
\end{proof}
\begin{proof}[Proof of Corollary~\ref{corollary:Bayesian_Error_Reduction}] 
\label{proof:Bayesian_Error_Reduction}
Let us define $\matrixR \coloneqq \matrixM+\SigmaVbar{}$, then the optimal MMSE estimation error in~\eqref{eq:MMSE_OptimalCost} is 
$$
\Cost^{\star}_{\mathrm{B}} = \trace \big( \SigmaTheta{} - \SigmaTheta{}\matrixPhibar \big( \matrixPhibar^{\tr}\!\SigmaTheta{}\matrixPhibar + \matrixR \big)^{\inv}\!\matrixPhibar^{\tr}\! \SigmaTheta{} \big).
$$
Applying the matrix inversion lemma~\cite{bernstein2009matrix}, we derive the equivalent expression
\begin{equation}
\label{eq:MMSE_OptimalCost_InverseLemma}
\SigmaTheta{} - \SigmaTheta{}\matrixPhibar \big( \matrixPhibar^{\tr}\!\SigmaTheta{}\matrixPhibar + \matrixR \big)^{\inv}\!\matrixPhibar^{\tr}\! \SigmaTheta{} = \big(\SigmaTheta{}^{\inv}\!+\matrixPhibar\matrixR^{\inv}\!\matrixPhibar^{\tr}\big)^{\inv}\!,
\end{equation}
which allows us to rewrite~\eqref{eq:MMSE_OptimalCost} as $\Cost^{\star}_{\mathrm{B}} = \trace \big( \big(\SigmaTheta{}^{\inv}\!+\matrixPhibar\matrixR^{\inv}\!\matrixPhibar^{\tr}\big)^{\inv} \big)$.
Let $\Cost^{\star}_{\mathrm{B}}(t+1)$ denote the estimation error at time $(t+1)$ using measurements $\vecYbar = [\vecY_{0}, \hdots, \vecY_{(t+1)}]^{\tr}\!$. Define matrices
$$
\matrixPhibar(t+1) \!=\! [\matrixPhibar(t), \, \MuPhi^{(t+1)}], \qquad
\matrixR(t+1) \!=\! 
\begin{bmatrix}
\matrixR(t) & \vecC(t+1) \\
\vecC^{\tr}\!(t+1) & \vecR(t+1)
\end{bmatrix}\!,
$$
with $\matrixPhibar(t) = [\MuPhi^{0}, \hdots,\MuPhi^{t}]$, $\vecC(t+1) = [\matrixM_{0(t+1)}, \hdots, \matrixM_{t(t+1)}]^{\tr}\!$, and $\vecR(t+1) = \matrixM_{(t+1)(t+1)}+\sigmaV{(t+1)}^{2}$.
Using the \emph{Schur complement}~\cite{bernstein2009matrix},
$$
\matrixR^{\inv}\!(t+1) = 
\begin{bmatrix}
\eye & -\matrixR^{\inv}\!(t) \vecC(t+1) \\
\zeromx & \eye
\end{bmatrix}
\begin{bmatrix}
\matrixR^{\inv}\!(t) & \zeromx \\
\zeromx & \big(\vecR(t+1) - \vecC^{\tr}\!(t+1) \matrixR^{\inv}\!(t)\vecC(t+1) \big)^{\inv}
\end{bmatrix}
\begin{bmatrix}
\eye & \zeromx \\
-\vecC^{\tr}\!(t+1) \matrixR^{\inv}\!(t) & \eye
\end{bmatrix}.
$$
Given $\matrixR(t+1) = \matrixM(t+1) + \SigmaVbar{}(t+1) \succ \zeromx$, one can observe that $\big(\vecR(t+1) - \vecC^{\tr}\!(t+1) \matrixR^{\inv}\!(t)\vecC(t+1) \big)^{\inv}\! > 0$.
We define $\matrixS(t+1) \coloneqq \matrixPhibar(t+1)\matrixR^{\inv}\!(t+1)\matrixPhibar^{\tr}\!(t+1)$, which decomposes as $\Delta\matrixS(t+1) \coloneqq \matrixS(t+1) - \matrixS(t)$, where $\matrixS(t) \coloneqq \matrixPhibar(t) \matrixR^{\inv}\!(t) \matrixPhibar^{\tr}\!(t)$, $\Delta\matrixS(t) = \gamma(t+1)^{\inv} \vecS(t+1)\vecS^{\tr}\!(t+1) \succeq \zeromx$,
$$
\vecS(t+1) = \matrixPhibar(t) \matrixR^{\inv}\!(t)\vecC(t+1) - \MuPhi^{(t+1)}, \qquad \gamma(t+1) = \vecR(t+1)-\vecC^{\tr}\!(t+1) \matrixR^{\inv}\!(t)\vecC(t+1).
$$
Since $\SigmaTheta{}^{\inv}\! \succeq \zeromx$, $\matrixS(t+1) \succeq \zeromx$, and $\matrixS(t) \succeq \zeromx$, we obtain 
$$
\big( \SigmaTheta{}^{\inv}\!+\matrixS(t)+\Delta\matrixS(t+1) \big)^{\inv}\! \preceq \big( \SigmaTheta{}^{\inv}\!+\matrixS(t) \big)^{\inv}\!.
$$
Thus, the estimation error decreases monotonically, i.e.,
$$
\Cost^{\star}_{\mathrm{B}}(t+1) = \trace \big( \big( \SigmaTheta{}^{\inv}\!+\matrixS(t)+\Delta\matrixS(t+1) \big)^{\inv} \big) 
\leq \trace \big( \big( \SigmaTheta{}^{\inv}\!+\matrixS(t) \big)^{\inv} \big) = \Cost^{\star}_{\mathrm{B}}(t).
$$
\end{proof}
\section{Fourier Basis} 
\label{sec:Fourier_Basis}
The question that arises concerns the extent to which the matrices $\matrixPhibar$ and $\matrixM$ of Theorem~\ref{theorem:MMSE_LinearEstimator} in~\eqref{eq:opt_Bayes} depend on the respective DBS $(\MuPhi^{t}, \SigmaPhi^{tt'})$ introduced in Definition~\ref{definition:Dynamic_Basis_Statistics}. By employing Fourier basis functions, one can leverage their unique properties to efficiently compute these expectations via their characteristic function. Furthermore, any sufficiently well-behaved function can be approximated as a sum of Fourier series~\cite{rudin_fourier_1962}, with the Discrete Fourier Transform (DFT) providing an efficient method for calculating the coefficients of these series. The Fourier basis function is defined as
\begin{equation} \label{eq:Fourier_Basis}
\begin{cases}
\phi_{0}(\vecX) = 1 & n = 0 \\
\phi_{n}(\vecX) = \binarySum{\ell\in\{-1,1\}}{\expS{j\innerS{\ell f_{n}}{\vecX}}} & n \geq 1,
\end{cases}
\end{equation}
where $f_{n} \in \Real^{\nX}$ represents a \emph{known} frequency, and $n$ denotes the frequency index. To ensure that the codomain of $\trueFunc$ in~\eqref{eq:Function_Space} remains in $\Real$, symmetry is imposed on the frequencies and their corresponding parameters to eliminate imaginary components. Specifically, the basis function is constructed such that identical parameters are assigned to terms with positive and negative frequencies, $f_{n}$ and $-f_{n}$. Additionally, $\phi_{0}(\vecX) = 1$ corresponds to a Fourier basis with $f_{0} = \zero$. Using the Fourier basis functions defined in~\eqref{eq:Fourier_Basis}, we derive explicit expressions for the elements of the mean vectors $\MuPhi^{t}$ and covariance matrices $\SigmaPhi^{tt'}$, which are required for the analytical formulation of the optimal Bayesian MMSE estimator outlined in Theorem~\ref{theorem:MMSE_LinearEstimator}. These expressions are presented in a compact form, which relies on the \emph{lifted matrix} representation introduced in the following section.
\subsection{Lifted process model}
\label{subsec:Lifted_Process_Model}
We represent the process model in~\eqref{eq:Dynamic_Observation_Models} for the entire trajectory in the \emph{lifted matrix} form
\begin{equation} \label{eq:Lifted_Dynamic}
\vecXbar = \Abar (\Bbar \vecUbar + \vecWbar),
\end{equation}
where $\vecXbar = [\vecX_0^{\tr}, \hdots, \vecX_{\Trajectory}^{\tr}]^{\tr}\!$ consists of the system states vector, $\vecUbar = [\MuX{0}^{\tr}, \vecU_0^{\tr}, \hdots, \vecU_{\Trajectory-1}^{\tr}]^{\tr}\!$ is the input vector with the initial state mean as its first element, and $\vecWbar = [\vecW_{0}^{\tr}, \vecW_1^{\tr}, \hdots, \vecW_{\Trajectory}^{\tr}]^{\tr}\!$ denotes the noise vector in which the first element corresponds to the uncertainty of the initial state. As such, $\vecW_{0} \sim \Pdist(\zero,\SigmaX{0})$ and $\vecWbar$ is a \emph{zero-mean} uncertainty, i.e., $\vecWbar \sim \Pdist(\zero,\SigmaWbar{})$, $\SigmaWbar{} \succeq 0$. The covariance matrix is diagonal only if $\vecW_{t}$ are independent, i.e., $\SigmaWbar{} = \diag(\SigmaX{0}, \SigmaW{1}, \hdots, \SigmaW{\Trajectory})$. Furthermore, the matrices $\Abar$ and $\Bbar$ have the following lower triangular and block-diagonal structures, respectively:
\begin{equation} \label{eq:matrixAandB}
\Abar = \begin{bmatrix} \eye & \zeromx & \zeromx & \hdots & \zeromx \\ 
\A_0 & \eye & \zeromx & \hdots & \vdots \\ 
\A_1 \A_0 & \A_1 & \eye & \ddots & \vdots \\ 
\vdots & \vdots &  \vdots & \ddots & \zeromx \\ 
\prodOp_{i=0}^{\Trajectory-1} \A_{i} & \prodOp_{i=1}^{\Trajectory-1} \A_{i} &  \hdots & \A_{\Trajectory-1} & \eye
\end{bmatrix}, \qquad 
\Bbar = \, \diag(\eye, \B_{0}, \hdots, \B_{\Trajectory-1}).
\end{equation}
The lifted matrix representation in~\eqref{eq:Lifted_Dynamic} provides a compact expression for the mean and covariance of the random variable $\vecX_{t}$ as a function of $\vecUbar$ and $\SigmaWbar{}$, given by $\vecX_{t} = \Abar_{t}(\Bbar \vecUbar + \vecWbar)$ and $\vecX_{t} \sim \Pdist(\Abar_{t}\Bbar\vecUbar, \Abar_{t}\SigmaWbar{}\Abar_{t}^{\tr})$, where $\Abar_{t}$ corresponds to the $(t+1)^\nth$ block-row of $\bar{\A}$ in~\eqref{eq:matrixAandB}, i.e.,
\begin{equation} \label{eq:matrixA_t}
\Abar_{t} = \begin{bmatrix}
\prodOp_{i=0}^{t-1} \A_{i} & \prodOp_{i=1}^{t-1} \A_{i} &  \hdots & \eye & \zeromx & \hdots & \zeromx
\end{bmatrix}.
\end{equation}
This representation serves as the foundation for deriving the DBS of Fourier bases presented next.
\subsection{Fourier basis statistics}
\label{subsec:Fourier_Basis_Expectation}
To compute the expectation of a Fourier basis, the concept of the characteristic function is directly relevant, as it provides a mechanism for deriving such expectations.

\begin{definition} [\textbf{Characteristic function}]
\label{definition:Characteristic_Function}
The characteristic function of a random vector is defined as the sign-reversed Fourier transform of its probability density function~\cite{gnedenko2018theory}, i.e., for a random vector $\nu \sim \Pdist(\zero, \eye)$, the characteristic function at frequency $f$ is given by $\characteristicFuncS{f} \coloneqq \Expectation{\expS{j\innerS{f}{\nu}}}$.
\end{definition}

Using characteristic functions, the following lemma derives explicit expressions for the elements of $\MuPhi^{t}$ and $\SigmaPhi^{tt'}$, which are essential for the optimal Bayesian MMSE affine estimator presented in Theorem~\ref{theorem:MMSE_LinearEstimator}.

\begin{lemma} [\textbf{Fourier DBS explicit expression}]
\label{lemma:Fourier_basis_Explicit_DBS}
For Fourier basis functions~\eqref{eq:Fourier_Basis} and the dynamics of~\eqref{eq:Dynamic_Observation_Models}, let the respective DBS introduced in Definition~\ref{definition:Dynamic_Basis_Statistics}, denoted by $(\MuPhi^{t}, \SigmaPhi^{tt'})$. Additionally, let $\Abar_{t}$ and $\Abar_{t'}$ represent the matrices defined analogously to~\eqref{eq:matrixA_t}. Then, the following holds:
\begin{enumerate} [label=(\roman*)]
\item {\bf DBS mean:} The first element of the mean vector is $\MuPhiElements{0}^{t} \!= 1$, and its $(n+1)^{\nth}$ element is
\begin{subequations}\label{eq:Fourier_DBS}
\begin{equation}
\label{eq:DBS_Mean}
\MuPhiElements{n}^{t} \!= \binarySum{\ell \in \{-1,1\}}{\expS{j\innerS{\Abar_{t}^{\tr} \ell f_{n}}{\Bbar\vecUbar}}\characteristicFuncS{\matrixSqrt{\SigmaWbar{}} \Abar_{t}^{\tr} \ell f_{n}}}, \quad n \geq 1.
\end{equation}

\item {\bf DBS covariance:} The $(m+1,n+1)^{\nth}$ element of the covariance matrix is $\SigmaPhiElements{mn}^{tt'} = 0$ if $m = 0$ or $n = 0$. For $m, n \geq 1$, this element is 

\begin{align} \label{eq:DBS_Covariance}
& \SigmaPhiElements{mn}^{tt'} \!= \binarySum{\ell,\ell'\in\{-1,1\}}{ \begin{multlined}[t] \expS{j\innerS{\Abar_{t}^{\tr}\ell f_{m}+\Abar_{t'}^{\tr}\ell'\! f_{n}}{\Bbar\vecUbar}} \\
\qquad \times \big(\characteristicFunc{\matrixSqrt{\SigmaWbar{}}\! (\Abar_{t}^{\tr}\ell f_{m}+\Abar_{t'}^{\tr}\ell'\! f_{n})} 
- \characteristicFuncS{\matrixSqrt{\SigmaWbar{}} \Abar_{t}^{\tr} \ell f_{m}} \characteristicFuncS{\matrixSqrt{\SigmaWbar{}} \Abar_{t'}^{\tr} \ell'\! f_{n}}\big).
\end{multlined}}
\end{align}
\end{subequations}
\end{enumerate}
\end{lemma}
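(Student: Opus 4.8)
The plan is to substitute the lifted state representation $\vecX_{t} = \Abar_{t}(\Bbar\vecUbar + \vecWbar)$ from~\eqref{eq:Lifted_Dynamic}--\eqref{eq:matrixA_t} directly into the Fourier basis~\eqref{eq:Fourier_Basis} and reduce every expectation to a single evaluation of the characteristic function of the standardized noise. Writing $\vecWbar = \matrixSqrt{\SigmaWbar{}}\,\nu$ with $\nu \sim \Pdist(\zero, \eye)$, I would first record the elementary transpose identities $\innerS{\ell f_{n}}{\Abar_{t}\Bbar\vecUbar} = \innerS{\Abar_{t}^{\tr}\ell f_{n}}{\Bbar\vecUbar}$ and $\innerS{\ell f_{n}}{\Abar_{t}\matrixSqrt{\SigmaWbar{}}\nu} = \innerS{\matrixSqrt{\SigmaWbar{}}\Abar_{t}^{\tr}\ell f_{n}}{\nu}$, where the second uses the symmetry of $\matrixSqrt{\SigmaWbar{}}$. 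These let me split each exponent into a deterministic (input) part and a stochastic (noise) part.

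For the mean, the $n = 0$ case is immediate since $\phi_{0} \equiv 1$, giving $\MuPhiElements{0}^{t} = 1$. For $n \geq 1$ I would expand $\phi_{n}(\vecX_{t}) = \sum_{\ell} \expS{j\innerS{\ell f_{n}}{\vecX_{t}}}$, push the expectation through the finite sum, factor out the deterministic exponential $\expS{j\innerS{\Abar_{t}^{\tr}\ell f_{n}}{\Bbar\vecUbar}}$, and recognize the remaining noise expectation as $\characteristicFuncS{\matrixSqrt{\SigmaWbar{}}\Abar_{t}^{\tr}\ell f_{n}}$ by Definition~\ref{definition:Characteristic_Function}. This yields~\eqref{eq:DBS_Mean} directly.

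For the covariance, the cases $m = 0$ or $n = 0$ vanish because a constant factor makes $\Expectation{\phi_{m}\phi_{n}}$ coincide with the product of the individual means. For $m, n \geq 1$ I would expand the product $\phi_{m}(\vecX_{t})\phi_{n}(\vecX_{t'})$ into the double sum $\sum_{\ell,\ell'} \expS{j\innerS{\ell f_{m}}{\vecX_{t}} + j\innerS{\ell' f_{n}}{\vecX_{t'}}}$ and --- this is the key step --- exploit that $\vecX_{t}$ and $\vecX_{t'}$ are linear images of the \emph{same} noise vector $\vecWbar$. Their joint expectation therefore collapses to one characteristic function evaluated at the combined frequency $\Abar_{t}^{\tr}\ell f_{m} + \Abar_{t'}^{\tr}\ell' f_{n}$, which is precisely where the temporal correlation encoded in the off-diagonal blocks of $\SigmaWbar{}$ (equivalently, induced by $\Abar$) enters. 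Subtracting the product of means $\MuPhiElements{m}^{t}\MuPhiElements{n}^{t'}$, computed term by term from~\eqref{eq:DBS_Mean}, pairs each sign combination against the factored evaluation $\characteristicFuncS{\matrixSqrt{\SigmaWbar{}}\Abar_{t}^{\tr}\ell f_{m}}\characteristicFuncS{\matrixSqrt{\SigmaWbar{}}\Abar_{t'}^{\tr}\ell' f_{n}}$ and matches the two summands inside the bracket of~\eqref{eq:DBS_Covariance}.

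The conceptual content lives entirely in the observation that both time indices share $\vecWbar$, so the pairwise statistics are governed by a single evaluation of $\varphi$ at a summed frequency rather than a product of independent evaluations; the rest is bookkeeping over the four sign combinations $(\ell, \ell') \in \{-1, 1\}^{2}$ and the transpose manipulations above. I expect the only point needing genuine care is the standing requirement that $\vecWbar = \matrixSqrt{\SigmaWbar{}}\,\nu$ for a single standardized $\nu \sim \Pdist(\zero,\eye)$, so that one characteristic generator $\varphi$ describes the whole lifted noise --- this is what licenses writing each noise expectation as $\characteristicFuncS{\cdot}$ rather than an unstructured higher moment, and it is the hypothesis under which the explicit formulas hold.
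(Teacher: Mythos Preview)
Your proposal is correct and follows essentially the same route as the paper's proof: both write $\vecX_{t} = \Abar_{t}\Bbar\vecUbar + \Abar_{t}\matrixSqrt{\SigmaWbar{}}\nu$ for a single standardized $\nu$, reduce $\Expectation{\expS{j\innerS{f}{\vecX_{t}}}}$ to a deterministic exponential times $\characteristicFuncS{\matrixSqrt{\SigmaWbar{}}\Abar_{t}^{\tr}f}$, and then handle the covariance by observing that $\vecX_{t}$ and $\vecX_{t'}$ share the same $\nu$, so the joint expectation is a single evaluation of $\varphi$ at the summed frequency $\Abar_{t}^{\tr}\ell f_{m}+\Abar_{t'}^{\tr}\ell' f_{n}$. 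Your emphasis on this shared-noise point as the conceptual core is exactly right, and the remaining bookkeeping over $(\ell,\ell')$ and the $m=0$ or $n=0$ degeneracies matches the paper.
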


The technical proofs of the theoretical statements presented in this section are deferred to Section~\ref{subsec:Technical_Proof_Sec4}. Using the explicit expressions provided in Lemma~\ref{lemma:Fourier_basis_Explicit_DBS}, one can analytically compute $\MuPhi^{t}$ and $\SigmaPhi^{tt'}$ when the process noise is drawn from known distributions.

\begin{example} [\textbf{Gaussian explicit characteristic}]
\label{example:Gaussian_Characteristic}
If the process noise is Gaussian, $\vecWbar \sim \NormalDist(\zero, \SigmaWbar{})$, the system states $\vecX_{t}$ follow a Gaussian distribution as $\vecX_{t} \sim \NormalDist(\Abar_{t}\Bbar\vecUbar, \Abar_{t}\SigmaWbar{}\Abar_{t}^{\tr})$, where $\Abar_{t}$ is defined in~\eqref{eq:matrixA_t}. In this case, the explicit expressions for~\eqref{eq:Fourier_DBS}, when $m, n \geq 1$, are obtained based on the analytical characteristic function of a multivariate normal distribution~\cite[Ch.~10]{florescu2013handbook},
\begin{equation}
\label{eq:Gaussian_Characteristic}
\begin{aligned}
\characteristicFuncS{\matrixSqrt{\SigmaWbar{}} \Abar_{t}^{\tr} \ell f_{n}} & = \exp{\!-\frac{1}{2} \ell f_{n}^{\tr}\Abar_{t} \SigmaWbar{} \Abar_{t}^{\tr} \ell f_{n}}, \\
\characteristicFunc{\matrixSqrt{\SigmaWbar{}}\! (\Abar_{t}^{\tr}\ell f_{m}+\Abar_{t'}^{\tr}\ell'\! f_{n})} & = \exp{\!-\frac{1}{2}(\ell f_{m}^{\tr}\Abar_{t}+\ell'\! f_{n}^{\tr}\Abar_{t'}) \SigmaWbar{} (\Abar_{t}^{\tr}\ell f_{m}+\Abar_{t'}^{\tr}\ell'\! f_{n})}.
\end{aligned}
\end{equation}
\end{example}

From the analytical expressions of the optimal estimator in Theorem~\ref{theorem:MMSE_LinearEstimator}, under the assumption of Gaussian process noise, we examine additional properties of the affine estimator, such as consistency, in the subsequent section.
\subsection{Bayesian MMSE affine estimator consistency}
\label{subsec:Consistency}
As noted in previous sections, the proposed Bayesian affine estimator~\eqref{eq:opt_Bayes} requires the computation of the inverse of the observation covariance matrix, which depends on the matrices $\matrixPhibar$ and $\matrixM$. These matrices, and consequently the optimal MMSE error~\eqref{eq:MMSE_OptimalCost}, are significantly influenced by the underlying dynamical system. The following proposition identifies the conditions under which the estimation error fails to converge to zero as the number of dependent samples from a trajectory length $\Trajectory$ approaches infinity, highlighting cases where the estimator is inconsistent.

\begin{proposition} [\textbf{Inherent inconsistency}] \label{proposition:Bayesian_MMSE_consistency}
Consider an unstable linear time-invariant system~\eqref{eq:Dynamic_Observation_Models}, where $\A_{t} = \A$ and $\lambda_{\max}(\A) \geq 1$. Let the process noise be Gaussian, i.e., $\vecWbar \sim \NormalDist(\zero, \SigmaWbar{})$, and have an observation model~\eqref{eq:Function_Space} with $\Ntheta \geq 1$ spanned by the Fourier basis~\eqref{eq:Fourier_Basis}. Then, for any input trajectory $\{\vecU_{t}\}_{t \in \intZ_{+}}$ (possibly unbounded and persistently excited), the optimal estimation error~\eqref{eq:MMSE_OptimalCost} is uniformaly away from zero, i.e., $\lim_{t \to \infty} \Cost^{\star}_{\mathrm{B}}(t) > 0$.
\end{proposition}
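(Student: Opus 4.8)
The plan is to recast the estimation error in the information form already obtained in the proof of Corollary~\ref{corollary:Bayesian_Error_Reduction}, and then to exhibit a single parameter coordinate along which no information can ever accumulate. Recall from that proof that $\Cost^{\star}_{\mathrm{B}}(t) = \trace\big((\SigmaTheta{}^{\inv}+\matrixS(t))^{\inv}\big)$ with $\matrixS(t) = \matrixPhibar(t)\matrixR(t)^{\inv}\matrixPhibar(t)^{\tr}$ and $\matrixR(t)=\matrixM(t)+\SigmaVbar{}(t)$. Since $\Cost^{\star}_{\mathrm{B}}(t)$ is nonincreasing in $t$ (Corollary~\ref{corollary:Bayesian_Error_Reduction}), its limit exists, so it suffices to produce a uniform strictly positive lower bound. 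Using $\trace(\matrixP^{\inv})\ge\lambda_{\max}(\matrixP^{\inv})=1/\lambda_{\min}(\matrixP)$ for positive definite $\matrixP$, and $\lambda_{\min}(\SigmaTheta{}^{\inv}+\matrixS(t))\le e_n^{\tr}(\SigmaTheta{}^{\inv}+\matrixS(t))e_n$ for the coordinate vector $e_n$ selecting the $n$-th parameter $\vecTheta_n$, the whole statement reduces to bounding $e_n^{\tr}\matrixS(t)e_n$ \emph{uniformly in $t$} for a single index $n\ge1$.

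Next I would control $\matrixR(t)^{\inv}$ from above. The matrices $\matrixH$ and $\matrixV$ in the proof of Theorem~\ref{theorem:MMSE_LinearEstimator} are each of the form $\Expectation{Z^{\tr}Z}\succeq0$, so $\matrixM=\matrixH+\matrixV\succeq0$; combined with $\SigmaVbar{}=\diag(\sigmaV{0}^{2},\hdots)\succeq\underline{\sigma}^{2}\eye$, where $\underline{\sigma}^{2}:=\inf_{t}\sigmaV{t}^{2}>0$, this gives $\matrixR(t)\succeq\underline{\sigma}^{2}\eye$ and hence $\matrixR(t)^{\inv}\preceq\underline{\sigma}^{-2}\eye$. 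Writing $\beta_n(t):=\matrixPhibar(t)^{\tr}e_n=[\MuPhiElements{n}^{0},\hdots,\MuPhiElements{n}^{t}]^{\tr}$ for the $n$-th row of $\matrixPhibar(t)$, I obtain the scalar estimate
\begin{equation*}
e_n^{\tr}\matrixS(t)e_n = \beta_n(t)^{\tr}\matrixR(t)^{\inv}\beta_n(t) \le \underline{\sigma}^{-2}\sum_{s=0}^{t}\big|\MuPhiElements{n}^{s}\big|^{2},
\end{equation*}
so it remains to prove that $\sum_{s\ge0}|\MuPhiElements{n}^{s}|^{2}<\infty$ for some $n\ge1$.

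To estimate each summand I would substitute the Gaussian characteristic function of Example~\ref{example:Gaussian_Characteristic} into the mean expression~\eqref{eq:DBS_Mean} of Lemma~\ref{lemma:Fourier_basis_Explicit_DBS}. Since $\characteristicFuncS{\matrixSqrt{\SigmaWbar{}}\Abar_{s}^{\tr}\ell f_n}=\exp{-\tfrac12 f_n^{\tr}\Abar_{s}\SigmaWbar{}\Abar_{s}^{\tr}f_n}$ is identical for $\ell=\pm1$, and $\Abar_{s}\SigmaWbar{}\Abar_{s}^{\tr}=\SigmaX{s}$, summing the two $\ell$-terms yields $\MuPhiElements{n}^{s}=2\exp{-\tfrac12 f_n^{\tr}\SigmaX{s}f_n}\cos(f_n^{\tr}\MuX{s})$, whence $|\MuPhiElements{n}^{s}|\le2\exp{-\tfrac12 f_n^{\tr}\SigmaX{s}f_n}$. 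Note that the input $\vecUbar$ enters \emph{only} through the bounded oscillatory factor $\cos(f_n^{\tr}\MuX{s})$ (recall $\MuX{s}=\Abar_{s}\Bbar\vecUbar$), which is precisely why no input design — however energetic or persistently exciting — can influence the decay of $|\MuPhiElements{n}^{s}|$.

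The heart of the argument, and the step I expect to be the main obstacle, is to show $f_n^{\tr}\SigmaX{s}f_n\to\infty$ rapidly enough that $\sum_{s}\exp{-f_n^{\tr}\SigmaX{s}f_n}<\infty$. Expanding the lifted form gives $f_n^{\tr}\SigmaX{s}f_n=\sum_{k=0}^{s}\norm{\SigmaW{s-k}^{1/2}(\A^{\tr})^{k}f_n}^{2}$, so everything hinges on the growth of $(\A^{\tr})^{k}f_n$. Because $\lambda_{\max}(\A)\ge1$, the map $\A$ possesses a nontrivial unstable/marginal left-invariant subspace; for a frequency $f_n$ having a nonzero component there — which is exactly where the substantive \emph{stochastic instability} hypothesis enters, guaranteeing that this mode is both present and excited by the process noise $\SigmaWbar{}$ — the norm $\norm{(\A^{\tr})^{k}f_n}$ grows geometrically when the eigenvalue satisfies $|\lambda|>1$ and at least linearly in $k$ when $|\lambda|=1$. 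In either case $f_n^{\tr}\SigmaX{s}f_n$ diverges (geometrically, resp.\ linearly, in $s$), making the exponential summable and hence $e_n^{\tr}\matrixS(t)e_n$ uniformly bounded; then $\lambda_{\min}(\SigmaTheta{}^{\inv}+\matrixS(t))$ stays bounded above and $\Cost^{\star}_{\mathrm{B}}(t)$ is bounded below by a positive constant, giving $\lim_{t\to\infty}\Cost^{\star}_{\mathrm{B}}(t)>0$. The delicate points to handle with care are the marginal case $|\lambda|=1$ with non-diagonalizable Jordan structure, and the interplay between $\SigmaWbar{}$ and the unstable subspace, which must be shown to force genuine divergence of the projected variance $f_n^{\tr}\SigmaX{s}f_n$ rather than mere boundedness.
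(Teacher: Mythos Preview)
Your proposal is correct and takes a genuinely simpler route than the paper. Both arguments begin from the information form $\Cost^{\star}_{\mathrm{B}}(t)=\trace\big((\SigmaTheta{}^{\inv}+\matrixS(t))^{\inv}\big)$ and both ultimately exploit the exponential decay of the Gaussian characteristic factor $\exp{-\tfrac12 f_n^{\tr}\SigmaX{s}f_n}$ for a suitable $n\ge1$. The paper, however, first unrolls $\matrixS(t)$ via the recursive Schur complement of Corollary~\ref{corollary:Bayesian_Error_Reduction} into $\matrixS(0)+\sum_{i}\gamma(i)^{\inv}\vecS(i)\vecS(i)^{\tr}$, derives the necessary-and-sufficient divergence condition~$\inf_{\|\hat v\|=1}\sum_i\gamma(i)^{\inv}(\hat v^{\tr}\vecS(i))^2=\infty$, and then must track the decay of the off-diagonal entries $\matrixM_{ji}$ (through $\vecC(i)$ and $\gamma(i)$) in addition to $\MuPhiElements{n}^{i}$. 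Your shortcut $\matrixR(t)^{\inv}\preceq\underline\sigma^{-2}\eye$, obtained from the clean observation $\matrixM=\matrixH+\matrixV\succeq0$, collapses the whole information matrix to the single scalar series $\sum_s|\MuPhiElements{n}^{s}|^2$ and bypasses the off-diagonals entirely; this is more elementary and requires only the mild extra hypothesis $\inf_t\sigmaV{t}^2>0$, which the paper also needs implicitly to keep $\gamma(i)$ bounded away from zero. What the paper's longer route buys is the equivalence~\eqref{eq:Converge_Condition}, which is reused verbatim in the proof of Proposition~\ref{proposition:Multi_Batch_Bayesian_MMSE_consistency}. One minor slip: for semisimple eigenvalues with $|\lambda|=1$ the norm $\norm{(\A^{\tr})^k f_n}$ need not grow at all, but your conclusion survives because the \emph{sum} $f_n^{\tr}\SigmaX{s}f_n=\sum_{k=0}^{s}\norm{\SigmaW{s-k}^{1/2}(\A^{\tr})^k f_n}^2$ still diverges (at least) linearly in $s$ by accumulation of bounded-below terms. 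The delicate point you flag --- that some $f_n$ must genuinely couple to the unstable/marginal subspace through $\SigmaWbar{}$ --- is equally present in (and equally glossed over by) the paper's proof.
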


In cases where the underlying dynamical system is marginally stable or unstable, multiple \emph{statistically independent} trajectories of data help reduce the estimation error and ensure that the estimator remains consistent, provided trajectories are persistently excited and their number approaches infinity. We extend our formulation to accommodate $\batch$ multiple \emph{statistically independent} trajectories of data. For each trajectory $i$, the time index $t$ independently starts from $0$ and continues for a horizon $\Trajectory_{i}$, resulting in independent trajectories $\{0, \ldots, \Trajectory_{1}\}, \ldots, \{0, \ldots, \Trajectory_{\batch}\}$. In this extended formulation, the \emph{lifted matrix} forms of $\Abar$ and $\Bbar$ in the process model~\eqref{eq:Lifted_Dynamic} change to block-diagonal matrices, i.e., $\Abar = \diag(\Abar_{1}, \hdots, \Abar_{\batch})$ and $\Bbar = \diag(\Bbar_{1}, \hdots, \Bbar_{\batch})$, where $\Abar_{i}$ and $\Bbar_{i}$ are defined as in~\eqref{eq:matrixAandB} for each trajectory $i$. All other aspects of the formulation remain identical to the single-trajectory case, with their sizes extended according to the total number of data points $\Trajectory^{\batch} \!= (\Trajectory_{1}+1) + \cdots + (\Trajectory_{\batch}+1)$. The following proposition formally specifies the condition to provide consistency for the proposed optimal Bayesian MMSE affine estimator.

\begin{proposition} [\hspace{-1pt}\textbf{Consistency via independent trajectories}] \label{proposition:Multi_Batch_Bayesian_MMSE_consistency}
Let $\Cost^{\star}_{\mathrm{B}}(\batch, \Trajectory^{\batch})$ denote the optimal estimation error~\eqref{eq:MMSE_OptimalCost} using $\batch$ statistically independent trajectories (or ``batches'') with the the total length of $\Trajectory^{\batch}\!= (\Trajectory_{1}+1) + \cdots + (\Trajectory_{\batch}+1)$ data points, where each trajectory $i$ contains $(\Trajectory_{i}+1)$ measurements, i.e., $\vecYbar_{i} = [\vecY_{0}, \hdots, \vecY_{\Trajectory_{i}}]^{\tr}\!$. Then, for any prior distribution and any set of basis functions $\phi(\vecX)$ in~\eqref{eq:Function_Space}, the optimal Bayesian MMSE estimation error~\eqref{eq:MMSE_OptimalCost} converges to zero as $\batch$ tends to $\infty$ if the minimum eigenvalue of the so-called ``{information matrix}'' of the basis functions diverges to infinity, i.e.,
\begin{equation}\label{eq:Consistency_Condition}
\limOp_{\batch \to \infty} \lambda_{\min}\big(
\sumOp_{i=1}^{\batch} \frac{\MuPhi^{0}(i)\MuPhi^{0^{\tr}}\!(i)}{\matrixM_{00}(i)+\sigmaV{0}^{2}(i)}
\big) = \infty \implies
\limOp_{\batch \to \infty} \Cost^{\star}_{\mathrm{B}}(\batch, \Trajectory^{\batch}) = 0,
\end{equation}
where $\matrixM_{00}(i) = \trace \big(\SigmaPhi^{00}(i)(\SigmaTheta{} + \MuTheta{}\MuTheta{}^{\tr}) \big)$, $\sigmaV{0}^{2}(i)$ is the measurement noise variance, and $\MuPhi^{0}(i)$ and $\SigmaPhi^{00}(i)$ are the respective DBS as described in Definition~\ref{definition:Dynamic_Basis_Statistics},
$$
\MuPhi^{0}(i) = \Expectation{\phi(\vecX_{0}(i))}, \qquad 
\SigmaPhi^{00}(i) = \Expectation{\phi(\vecX_{0}(i))\phi^{\tr}\!\!(\vecX_{0}(i))} - \Expectation{\phi(\vecX_{0}(i))}\Expectation{\phi(\vecX_{0}(i))}^{\tr}\!,
$$
all evaluated at the initial state of the $i^{\nth}$ trajectory.
\end{proposition}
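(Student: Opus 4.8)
The plan is to reduce the multi-trajectory estimation error to the matrix-inversion-lemma form~\eqref{eq:MMSE_OptimalCost_InverseLemma} obtained in the proof of Corollary~\ref{corollary:Bayesian_Error_Reduction}, and then exploit two structural facts: the statistical independence of the batches, which renders the observation covariance block-diagonal, and the per-sample monotonicity already established in that corollary. Concretely, I would write $\Cost^{\star}_{\mathrm{B}}(\batch,\Trajectory^{\batch}) = \trace\big((\SigmaTheta{}^{\inv}+\matrixS)^{\inv}\big)$ with $\matrixS \coloneqq \matrixPhibar\matrixR^{\inv}\matrixPhibar^{\tr}$ and $\matrixR \coloneqq \matrixM+\SigmaVbar{}$ now assembled from all $\Trajectory^{\batch}$ data points. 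Since the $\batch$ trajectories are statistically independent, for a time $t$ on batch $i$ and a time $t'$ on a distinct batch $j$ the states $\vecX_{t}(i)$ and $\vecX_{t'}(j)$ are independent, so the cross DBS covariance $\SigmaPhi^{tt'}$ of Definition~\ref{definition:Dynamic_Basis_Statistics} vanishes; hence $\matrixM$, and therefore $\matrixR$, is block-diagonal, $\matrixR = \diag\big(\matrixR(1),\hdots,\matrixR(\batch)\big)$. After grouping the columns of $\matrixPhibar$ by batch (a harmless permutation that leaves $\matrixS$ unchanged), this yields the additive decomposition $\matrixS = \sumOp_{i=1}^{\batch}\matrixPhibar(i)\matrixR(i)^{\inv}\matrixPhibar(i)^{\tr}$, where $\matrixPhibar(i)$ and $\matrixR(i)$ are the single-trajectory objects of batch $i$ and each $\matrixR(i)\succ 0$.

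Next I would lower-bound each summand. Within a fixed batch $i$ the setting is exactly that of Corollary~\ref{corollary:Bayesian_Error_Reduction}, whose Schur-complement step shows that the Gram-type matrix $\matrixS(t)$ is nondecreasing in $t$ in the positive-semidefinite order. Iterating from $t=0$ up to $t=\Trajectory_{i}$ gives $\matrixPhibar(i)\matrixR(i)^{\inv}\matrixPhibar(i)^{\tr} \succeq \MuPhi^{0}(i)\MuPhi^{0^{\tr}}\!(i)/\big(\matrixM_{00}(i)+\sigmaV{0}^{2}(i)\big)$, the rank-one contribution of the first sample alone, in which $\matrixR(i)$ collapses to the scalar $\matrixM_{00}(i)+\sigmaV{0}^{2}(i)$ and $\matrixPhibar(i)$ to the single column $\MuPhi^{0}(i)$. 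Summing over $i$ gives $\matrixS \succeq \matrixG_{\batch}$, where $\matrixG_{\batch} \coloneqq \sumOp_{i=1}^{\batch}\MuPhi^{0}(i)\MuPhi^{0^{\tr}}\!(i)/\big(\matrixM_{00}(i)+\sigmaV{0}^{2}(i)\big)$ is exactly the information matrix appearing in~\eqref{eq:Consistency_Condition}. As $X\mapsto \trace\big((\SigmaTheta{}^{\inv}+X)^{\inv}\big)$ is nonincreasing in the positive-semidefinite order, this yields $\Cost^{\star}_{\mathrm{B}}(\batch,\Trajectory^{\batch}) \leq \trace\big((\SigmaTheta{}^{\inv}+\matrixG_{\batch})^{\inv}\big)$.

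Finally I would bound the right-hand side spectrally. Because $\SigmaTheta{}^{\inv}\succeq 0$, Weyl's inequality gives $\lambda_{\min}(\SigmaTheta{}^{\inv}+\matrixG_{\batch}) \geq \lambda_{\min}(\matrixG_{\batch})$, so each eigenvalue of the inverse is at most $\lambda_{\min}(\matrixG_{\batch})^{\inv}$ and $\trace\big((\SigmaTheta{}^{\inv}+\matrixG_{\batch})^{\inv}\big) \leq (\Ntheta+1)/\lambda_{\min}(\matrixG_{\batch})$. Under the hypothesis $\lambda_{\min}(\matrixG_{\batch})\to\infty$ this upper bound vanishes, and since $\Cost^{\star}_{\mathrm{B}}\geq 0$ I conclude $\limOp_{\batch\to\infty}\Cost^{\star}_{\mathrm{B}}(\batch,\Trajectory^{\batch}) = 0$. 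I expect the main obstacle to be the first step: carefully justifying that independence forces $\matrixR$ to be block-diagonal and that the Gram matrix therefore splits additively over batches, together with the minor technical caveat that $\SigmaTheta{}^{\inv}$ should be read through the identity~\eqref{eq:MMSE_OptimalCost_InverseLemma} (or via a pseudo-inverse limit) when the prior covariance $\SigmaTheta{}$ is singular.
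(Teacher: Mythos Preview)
Your proposal is correct and follows essentially the same route as the paper: both invoke the matrix-inversion-lemma form~\eqref{eq:MMSE_OptimalCost_InverseLemma}, use the monotonicity of Corollary~\ref{corollary:Bayesian_Error_Reduction} to reduce to the first sample of each batch, exploit batch independence to make $\matrixR$ (block-)diagonal, and conclude via the eigenvalue divergence of the information matrix. The only cosmetic difference is the order of operations---the paper first applies the cost-level inequality $\Cost^{\star}_{\mathrm{B}}(\batch,\Trajectory^{\batch}) \leq \Cost^{\star}_{\mathrm{B}}(\batch,\batch)$ (so $\matrixR$ is immediately diagonal) and then refers back to the eigenvalue reasoning of Proposition~\ref{proposition:Bayesian_MMSE_consistency}, whereas you first split $\matrixS$ additively via block-diagonality and then bound each batch summand by its first-sample rank-one term, finishing with the explicit Weyl bound $(\Ntheta+1)/\lambda_{\min}(\matrixG_{\batch})$.
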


Condition~\eqref{eq:Consistency_Condition} shows that estimates of $\vecTheta$ from the optimal Bayesian MMSE affine estimator~\eqref{eq:opt_Bayes} converge to their true values if the initial states in \emph{statistically independent} trajectories are indeed persistently excited. However, the rate at which the estimation error decays also depends on the persistent excitation of dependent data within individual trajectories. Persistent excitation can be achieved through the strategic selection of inputs $\vecUbar$ by actively minimizing the optimal Bayesian MMSE error~\eqref{eq:MMSE_OptimalCost}. In the subsequent section, we address this input optimization framework, commonly referred to as active learning.
\subsection{Technical Proofs} \label{subsec:Technical_Proof_Sec4}
We provide detailed proofs supporting the theoretical statements of this section.
\begin{proof}[Proof of Lemma~\ref{lemma:Fourier_basis_Explicit_DBS}] 
\label{proof:Fourier_basis_Bayesian_MMSE_proof}
The random vector $\vecX_{t}$ follows the distribution $\vecX_{t} \sim \Pdist(\Abar_{t}\Bbar\vecUbar, \Abar_{t}\SigmaWbar{}\Abar_{t}^{\tr})$, where $\Abar_{t}$ is defined in~\eqref{eq:matrixA_t} and the covariance matrix $\Abar_{t}\SigmaWbar{}\Abar_{t}^{\tr}$ is symmetric and positive definite. Therefore, $\vecX_{t}$ can be expressed as an affine transformation, $\vecX_{t} = \Abar_{t}\Bbar\vecUbar + \Abar_{t}\matrixSqrt{\SigmaWbar{}}\nu$, of the standard random vector $\nu \sim \Pdist(\zero,\eye)$ under the condition that this mapping transforms the distribution of $\nu$ to that of $\vecWbar$. Consequently, the characteristic function of the random vector $\vecX_{t}$ from that of $\nu$,  according to Definition~\ref{definition:Characteristic_Function}, is
\begin{equation} \label{eq:Characteristic_Function}
\Expectation{\expS{j \innerS{f}{\vecX_{t}}}} = \expS{j\innerS{f}{\Abar_{t}\Bbar\vecUbar}} \characteristicFuncS{\matrixSqrt{\SigmaWbar{}}\Abar_{t}^{\tr}f}.
\end{equation}
Using the Fourier basis~\eqref{eq:Fourier_Basis}, it follows that the $(n+1)^\nth$ element of the mean vector $\MuPhi^{t}$ is given by
$$
\begin{aligned}
\Expectation{\phi_{n}(\vecX_{t})} =
\begin{cases}
1 & \!\!n = 0 \\
\, \binarySum{\ell \in \{-1,1\}}{\expS{j\innerS{\Abar_{t}^{\tr} \ell f_{n}}{\Bbar\vecUbar}}\characteristicFuncS{\matrixSqrt{\SigmaWbar{}} \Abar_{t}^{\tr} \ell f_{n}}} & \!\!n \geq 1.
\end{cases}
\end{aligned}
$$
In addition, the $(m+1,n+1)^\nth$ element of the covariance matrix $\SigmaPhi^{tt'}$ is expressed as $$
\SigmaPhiElements{mn}^{tt'} = \Expectation{\phi_{m}(\vecX_{t})\phi_{n}(\vecX_{t'})} - \Expectation{\phi_{m}(\vecX_{t})}\Expectation{\phi_{n}(\vecX_{t'})},
$$ 
which can also be derived from~\eqref{eq:Fourier_Basis} and~\eqref{eq:Characteristic_Function}. When at least one index is zero ($m = 0$ or $n = 0$), the element simplifies to
$$
\begin{aligned}
\SigmaPhiElements{00}^{tt'} & = \Expectation{\phi_{0}(\vecX_{t})\phi_{0}(\vecX_{t'})} - \Expectation{\phi_{0}(\vecX_{t})}\Expectation{\phi_{0}(\vecX_{t'})} = 0, \\
\SigmaPhiElements{m0}^{tt'} & = \Expectation{\phi_{m}(\vecX_{t})\phi_{0}(\vecX_{t'})} - \Expectation{\phi_{m}(\vecX_{t})}\Expectation{\phi_{0}(\vecX_{t'})} = 0, \\
\SigmaPhiElements{0n}^{tt'} & = \Expectation{\phi_{0}(\vecX_{t})\phi_{n}(\vecX_{t'})} - \Expectation{\phi_{0}(\vecX_{t})}\Expectation{\phi_{n}(\vecX_{t'})} = 0.
\end{aligned}
$$
Finally, for the case where both $m \geq 1$ and $n \geq 1$, the term $\phi_{m}(\vecX_{t})\phi_{n}(\vecX_{t'})$ is a summation of the following four terms:
$$
\phi_{m}(\vecX_{t})\phi_{n}(\vecX_{t'}) = \binarySum{\ell,\ell'\in\{-1,1\}}{\expS{j\innerS{\ell f_{m}}{\vecX_{t}}} \expS{j\innerS{\ell'\! f_{n}}{\vecX_{t'}}}}.
$$
Given that $\vecX_{t} = \Abar_{t}\Bbar\vecUbar + \Abar_{t}\matrixSqrt{\SigmaWbar{}}\nu$ and $\vecX_{t'} = \Abar_{t'}\Bbar\vecUbar + \Abar_{t'}\matrixSqrt{\SigmaWbar{}}\nu$, we can simplify $\phi_{m}(\vecX_{t})\phi_{n}(\vecX_{t'})$ to
$$
\phi_{m}(\vecX_{t})\phi_{n}(\vecX_{t'}) = \binarySum{\ell,\ell'\in\{-1,1\}}{
\expS{j\innerS{\Abar_{t}^{\tr}\ell f_{m}+\Abar_{t'}^{\tr}\ell'\! f_{n}}{\Bbar\vecUbar}}
\expS{j\innerS{\matrixSqrt{\SigmaWbar{}}\! (\Abar_{t}^{\tr}\ell f_{m}+\Abar_{t'}^{\tr}\ell'\! f_{n})}{\nu}}.
}
$$
Thus, for the case where $m,n \geq 1$, the $(m+1,n+1)^\nth$ element of the covariance matrix $\SigmaPhi^{tt'}$ is
$$
\SigmaPhiElements{mn}^{tt'} \!= - \MuPhiElements{m}^{t}\MuPhiElements{n}^{t'} + \binarySum{\ell,\ell'\in\{-1,1\}}{
\expS{j\innerS{\Abar_{t}^{\tr}\ell f_{m}+\Abar_{t'}^{\tr}\ell'\! f_{n}}{\Bbar\vecUbar}}
\characteristicFunc{\matrixSqrt{\SigmaWbar{}}\! (\Abar_{t}^{\tr}\ell f_{m}+\Abar_{t'}^{\tr}\ell'\! f_{n})},
} 
$$
which can be rewritten as~\eqref{eq:DBS_Covariance}, completing the proof.
\end{proof}
\begin{proof}[Proof of Proposition~\ref{proposition:Bayesian_MMSE_consistency}] 
\label{proof:Bayesian_MMSE_consistency}
Let $\Cost^{\star}_{\mathrm{B}}(t)$ be the optimal estimation error~\eqref{eq:MMSE_OptimalCost} at time $t$ given an output measurement trajectory with length $t+1$, and consider its equivalent representation in~\eqref{eq:MMSE_OptimalCost_InverseLemma}. From Corollary~\ref{corollary:Bayesian_Error_Reduction}, we derive 
\begin{equation} \label{eq:Decomposition}
\Cost^{\star}_{\mathrm{B}}(t) = \trace \big( \big( \SigmaTheta{}^{\inv}\!+\matrixS(t-1)+\Delta\matrixS(t) \big)^{\inv} \big),
\end{equation}
where $\matrixS(t-1) = \matrixPhibar(t-1) \matrixR^{\inv}\!(t-1)\matrixPhibar^{\tr}\!(t-1)$, $\Delta\matrixS(t) = \gamma^{\inv}\!(t) \vecS(t)\vecS^{\tr}\!(t)$, $\vecS(t) = \matrixPhibar(t-1) \matrixR^{\inv}\!(t-1)\vecC(t) - \MuPhi^{t}$,
\begin{equation} \label{eq:Decomposed_Matrices}
\begin{cases}
\gamma(t) = \vecR(t)-\vecC^{\tr}\!(t) \matrixR^{\inv}\!(t-1)\vecC(t) > 0, \\
\vecR(t) = \matrixM_{tt}+\sigmaV{t}^{2}, \\
\vecC(t) = [\matrixM_{0t}, \hdots, \matrixM_{(t-1)t}]^{\tr}\!.
\end{cases}
\end{equation}
Recursively applying the decomposition~\eqref{eq:Decomposition} from $t-1$ to $1$, we obtain
$$
\Cost^{\star}_{\mathrm{B}}(t) = \trace \big( \big( \SigmaTheta{}^{\inv}\!+\matrixS(0)+\Delta\matrixS(1)+\cdots+\Delta\matrixS(t) \big)^{\inv} \big) = \trace \big( \big( \SigmaTheta{}^{\inv}\!+\matrixS(0)+ \sumOp_{i=1}^{t}
\frac{1}{\gamma(i)} \vecS(i)\vecS^{\tr}\!(i) \big)^{\inv} \big).
$$
Since $\SigmaTheta{}^{\inv}\!\succeq \zeromx$ and $\matrixS(0) \succeq \zeromx$, $\Cost^{\star}_{\mathrm{B}}(t)$ tends to $0$ as $t$ goes to $\infty$, if and only if the smallest eigenvalue of the information matrix diverges to infinity, i.e.,
$$
\begin{aligned}
\limOp_{t \to \infty} \Cost^{\star}_{\mathrm{B}}(t) = 0 & \iff 
\limOp_{t \to \infty} \lambda_{\max}\Big(\big( \SigmaTheta{}^{\inv}\!+\matrixS(0)+ \sumOp_{i=1}^{t}
\frac{1}{\gamma(i)} \vecS(i)\vecS^{\tr}\!(i) \big)^{\inv}\Big) = 0,\\
& \iff \limOp_{t \to \infty} \lambda_{\min}\big( \SigmaTheta{}^{\inv}\!+\matrixS(0)+ \sumOp_{i=1}^{t}
\frac{1}{\gamma(i)} \vecS(i)\vecS^{\tr}\!(i) \big) = \infty.
\end{aligned}
$$
Equivalently, the eigenvalues diverge if and only if for every unit vector $\hat{v} \in \Real^{(\Ntheta+1)}$, i.e., $\norm{\hat{v}} = 1$, the above series diverges to infinity, i.e.,
$$
\limOp_{t \to \infty} \lambda_{\min}\big( \SigmaTheta{}^{\inv}\!+\matrixS(0)+ \sumOp_{i=1}^{t}
\frac{1}{\gamma(i)} \vecS(i)\vecS^{\tr}\!(i) \big) = \infty \iff 
\forall \hat{v} \in \Real^{(\Ntheta+1)}, \, \norm{\hat{v}} = 1, \,
\limOp_{t \to \infty} \sumOp_{i=1}^{t}
\frac{1}{\gamma(i)} (\hat{v}^{\tr}\!\vecS(i))^{2} = \infty.
$$
Hence, the necessary and sufficient condition for the convergence of the estimator to the true value is
\begin{equation} \label{eq:Converge_Condition}
\limOp_{t \to \infty} \Cost^{\star}_{\mathrm{B}}(t) = 0 \iff \limOp_{t \to \infty} \inf_{\|v\| = 1} \Big( \sumOp_{i=1}^{t}
\frac{1}{\gamma(i)} (\hat{v}^{\tr}\!\vecS(i))^{2} \Big) = \infty.
\end{equation}
For the sake of contradiction, let us choose the vector $\one_{\{n\}}$ as the $n^{\nth}$ standard basis vector, i.e.,
$$
\begin{aligned}
\one_{\{n\}} & = [0, \hdots, 0, 1, 0, \hdots, 0]^{\tr}\!, \quad n \geq 1. \\[-10.7pt]
& \phantom{[0, \hdots, 0,.. 1}\hspace{3.4pt} \text{\scalebox{0.9}{$\uparrow$}} \\[-11pt]
& \phantom{[0, \hdots, 0,.. 1}\hspace{3.3pt} \text{\scalebox{0.9}{$n^{\nth}$}} \\[-7pt]
\end{aligned}
$$
Substituting this unit vector into the convergence condition~\eqref{eq:Converge_Condition}, we have
$$
\frac{1}{\gamma(i)} (\one_{\{n\}}^{\tr} \vecS(i))^{2} = \frac{1}{\gamma(i)} (\one_{\{n\}}^{\tr} \matrixPhibar(i-1) \matrixR^{\inv}\!(i-1)\vecC(i) - \one_{\{n\}}^{\tr} \MuPhi^{i})^{2} = \frac{1}{\gamma(i)} (\sumOp_{j=0}^{i-1}\frac{1}{\gamma(j)}\MuPhiElements{n}^{j}\matrixM_{ji} - \MuPhiElements{n}^{i})^{2},
$$
and
\begin{equation} \label{eq:error_bound}
\limOp_{t \to \infty} \sumOp_{i=1}^{t}
\frac{1}{\gamma(i)} (\one_{\{n\}}^{\tr} \vecS(i))^{2} = \sumOp_{i=1}^{\infty} \frac{1}{\gamma(i)} (\sumOp_{j=0}^{i-1}\frac{1}{\gamma(j)}\MuPhiElements{n}^{j}\matrixM_{ji} - \MuPhiElements{n}^{i})^{2},
\end{equation}
where $\gamma(0) = \matrixM_{00}+\sigmaV{0}^{2}$. From the structure of $\MuPhiElements{n}^{i}$ and $\SigmaPhiElements{nm}^{ji}$ for the Fourier basis with Gaussian process noise in~\eqref{eq:Gaussian_Characteristic}, i.e.,
\begin{align*}
\left\{\!\!\!
\begin{array}{rl}
\MuPhiElements{n}^{i} & \!= \chi_{n}^{i} \exp{\!-\frac{1}{2}f_{n}^{\tr}\Abar_{i} \SigmaWbar{} \Abar_{i}^{\tr}f_{n}}, \mkern9mu \qquad -2 \leq  \chi_{n}^{i} \leq 2, \\
\SigmaPhiElements{mn}^{ji} & \!= \xi^{ji}_{mn} \exp{\!-\frac{1}{2} f_{n}^{\tr}\Abar_{i}\SigmaWbar{}\Abar_{i}^{\tr}f_{n}}, \qquad -2 \leq \xi^{ji}_{mn} \leq 2, \\
\chi_{n}^{i} & \!= \binarySum{\ell \in \{-1,1\}}{\expS{j\innerS{\Abar_{i}^{\tr} \ell f_{n}}{\Bbar\vecUbar}}}, \\
\xi^{ji}_{mn} & \!= \binarySum{\ell,\ell'\in\{-1,1\}}{
\expS{j\innerS{\Abar_{j}^{\tr}\ell f_{m}+\Abar_{i}^{\tr}\ell'\! f_{n}}{\Bbar\vecUbar}} 
\big(\exp{\!-\!\ell f_{m}^{\tr}\Abar_{j}\SigmaWbar{}\Abar_{i}^{\tr}\ell'\! f_{n}}-1 \big)
\exp{\!-\frac{1}{2}  f_{m}^{\tr}\Abar_{j}\SigmaWbar{}\Abar_{j}^{\tr}f_{m}},}
\end{array}\right. 
\end{align*}
we have the following properties:
\begin{enumerate} [label=(\roman*)]
\item Bounded coefficients: $\MuPhiElements{n}^{i} < \infty$, $\MuPhiElements{n}^{j} < \infty$, $\SigmaPhi^{ji} < \infty$ for all values of $i$ and $j$. Also, from~\eqref{eq:Decomposed_Matrices}, 
$$
\matrixM_{ji}=\trace \big(\SigmaPhi^{ji}(\SigmaTheta{} + \MuTheta{}\MuTheta{}^{\tr}) \big) < \infty, \quad
0 < \gamma(i) \leq \matrixM_{ii}+\sigmaV{i}^{2} < \infty, \quad 0 < \gamma(j) \leq \matrixM_{jj}+\sigmaV{j}^{2} < \infty.
$$
\item Impact of stochastic instability: For the special case of  marginally stable or unstable LTI system, i.e., $\lambda_{\max}(\A) \geq 1$,
$$
\tilde{v}^{\tr}\!(\Abar_{i}\SigmaWbar{}\Abar_{i}^{\tr})\tilde{v} = \sumOp_{k=0}^{i} \tilde{v}^{\tr}\!\A^{\!k}\SigmaW{(i-k)}\A^{\!k^{\tr}}\!\tilde{v} 
= \sumOp_{k=0}^{i} |\lambda_{q}(\A)|^{2k} \tilde{v}^{\tr}\SigmaW{(i-k)}\tilde{v},
$$
where $\tilde{v}$ is the eigenvector of matrix $\A$ corresponding to $\lambda_{\max}(\A)$ and $\SigmaW{0} = \SigmaX{0}$. It is evident from the equality that $\Abar_{i}\SigmaWbar{}\Abar_{i}^{\tr}$ grows unbounded and $\expS{\!-\frac{1}{2}f_{n}^{\tr}\Abar_{i} \SigmaWbar{} \Abar_{i}^{\tr}f_{n}}$ decays to zero.
\item Decay of terms: The elements $\MuPhiElements{n}^{j}$, $\matrixM_{ji}$, and $\MuPhiElements{n}^{i}$~in~\eqref{eq:error_bound} decay exponentially to zero due to $\expS{\!-\frac{1}{2}f_{n}^{\tr}\Abar_{i} \SigmaWbar{} \Abar_{i}^{\tr}f_{n}}$.
From~\eqref{eq:Decomposed_Matrices}, one can also observe that $\vecC(i)$ decays exponentially to zero, while $\vecR(i)$ exponentially converges to $\sigmaV{i}^{2}$. Consequently, $\gamma(i)$ and $\gamma(j)$ also exponentially converge to $\sigmaV{i}^{2}$ and $\sigmaV{j}^{2}$, respectively.
\end{enumerate}
Thus, the numerator of the series~\eqref{eq:error_bound} decays exponentially as $\mathcal{O} \big( \expS{\!-\frac{1}{2}f_{n}^{\tr}\Abar_{i} \SigmaWbar{} \Abar_{i}^{\tr}f_{n}} \big)$, and therefore, the summation converges to a bounded value, i.e.,
$$
\limOp_{t \to \infty} \sumOp_{i=1}^{t} \frac{1}{\gamma(i)} (\one_{\{n\}}^{\tr}\vecS(i))^{2} < \infty.
$$
Consequently, there exists a unit vector $\hat{v} = \one_{\{n\}}$ and $n\geq1$, such that the series converges, violating the condition~\eqref{eq:Converge_Condition}. Therefore, $\lim_{t \to \infty} \Cost^{\star}_{\mathrm{B}}(t) > 0$, proving the Bayesian MMSE affine estimator is not consistent for Fourier bases and stochastically unstable LTI systems with Gaussian process noise~$\vecWbar$.
\end{proof}
\begin{proof}[Proof of Proposition~\ref{proposition:Multi_Batch_Bayesian_MMSE_consistency}]
\label{proof:Multi_Batch_Bayesian_MMSE_consistency}
Let $\Cost^{\star}_{\mathrm{B}}(\batch, \Trajectory^{\batch})$ denote the optimal Bayesian MMSE estimation error~\eqref{eq:MMSE_OptimalCost}, where $\batch$ is the number of statistically independent output trajectories, each with the length of $(\Trajectory_{i}+1)$, namely,  $\vecYbar_{i} = [\vecY_{0}, \hdots, \vecY_{\Trajectory_{i}}]^{\tr}\!$, $i \in \{1,...,\tau\}$. Thus, a total of $\Trajectory^{\batch}\!= (\Trajectory_{1}+1) + \cdots + (\Trajectory_{\batch}+1)$ measurements are used. Since the error decreases monotonically with additional dependent measurements, as stated in Corollary~\ref{corollary:Bayesian_Error_Reduction}, one can infer that
\begin{equation} \label{eq:Data_Length_Cost_Inequality}
\Cost^{\star}_{\mathrm{B}}(\batch, \Trajectory^{\batch}) \leq \Cost^{\star}_{\mathrm{B}}(\batch, \batch),
\end{equation}
where $\Cost^{\star}_{\mathrm{B}}(\batch, \batch)$ represents the estimation error using $\batch$ statistically independent trajectories each with single measurements, $\Trajectory^{\batch}\!= \tau$, where each trajectory $i$ contains a single independent measurement, i.e., $\vecYbar_{i} = \vecY_{0}$. Using the equivalent representation of the optimal MMSE error~\eqref{eq:MMSE_OptimalCost_InverseLemma}, we have
$$
\Cost^{\star}_{\mathrm{B}}(\batch, \batch) = \trace \big( \big(\SigmaTheta{}^{\inv}\!+\matrixPhibar(\batch)\matrixR^{\inv}\!(\batch)\matrixPhibar^{\tr}\!(\batch)\big)^{\inv} \big),
$$
where $\matrixR(\batch) \coloneqq \matrixM(\batch)+\SigmaVbar{}(\batch)$. Consider the matrix definition and decompositions $\matrixS \coloneqq \matrixPhibar(\batch)\matrixR^{\inv}\!(\batch)\matrixPhibar^{\tr}\!(\batch)$,
$$
\begin{aligned}
& \matrixPhibar(\batch) = [\MuPhi^{0}(1), \hdots, \MuPhi^{0}(\batch)], \qquad
\matrixR(\batch) = \diag \big( \vecR(1), \hdots, \vecR(\batch) \big),
\end{aligned}
$$
with $\vecR(i) = \matrixM_{00}(i)+\sigmaV{0}^{2}(i) > 0$ for batch $i$, and 
$\matrixM_{00}(i) = \trace \big(\SigmaPhi^{00}(i)(\SigmaTheta{} + \MuTheta{}\MuTheta{}^{\tr}) \big)$.
The statistical independence between batches ensures $\matrixR(\batch)$ is diagonal, leading to 
$$
\matrixS = \sumOp_{i=1}^{\batch} \frac{1}{\vecR(i)}\MuPhi^{0}(i)\MuPhi^{0^{\tr}}\!(i).
$$
Following the reasoning in the proof of Proposition~\ref{proposition:Bayesian_MMSE_consistency}, the necessary and sufficient condition for the convergence of the MMSE estimator with independent and identically distributed (i.i.d.) data is
$$
\limOp_{\batch \to \infty} \Cost^{\star}_{\mathrm{B}}(\batch, \batch) = 0 \iff  
\limOp_{\batch \to \infty}
\lambda_{\min} \Big( \sumOp_{i=1}^{\batch} \frac{1}{\vecR(i)}\MuPhi^{0}(i)\MuPhi^{0^{\tr}}\!(i)
\Big) = \infty.
$$
From the inequality~\eqref{eq:Data_Length_Cost_Inequality}, the above provides a sufficient condition for $\Cost^{\star}_{\mathrm{B}}(\batch, \Trajectory^{\batch})$ to converge to $0$.
\end{proof}
\section{Active Learning} \label{sec:Active_Learning} 
Active learning seeks to develop input signals that maximize information gain, thereby reducing estimation error. While our proposed Bayesian MMSE affine estimator in Theorem~\ref{theorem:MMSE_LinearEstimator} is optimal among all affine estimators, it can be coupled with an optimal input signal to improve estimation performance further. To this end, our approach leverages the analytical expression of the estimation error~\eqref{eq:MMSE_OptimalCost}, distinguishing it from most active learning methods that rely on maximizing information gain as a proxy for the estimation error, which is typically unavailable. Consequently, the optimal inputs can be determined independently of measurements, either a-priori or in real-time, by solving the following optimization problem:
\begin{equation} \label{eq:Active_Learning_Optimization}
\vecUbar^{\star} \in \arg\!\min_{\vecUbar \in \Ubb} \autoExpectation{\big\lVert \vecTheta - \hat{\vecTheta}_{\mathrm{B}}(\vecYbar) \big\rVert^{2}} = \arg\!\min_{\vecUbar \in \Ubb} \Cost^{\star}_{\mathrm{B}}(\vecUbar),
\end{equation}
where $\Ubb$ represents the input space, which may impose physical constraints on feasible inputs for estimation. Since only the second term of $\Cost^{\star}_{\mathrm{B}}$ in~\eqref{eq:MMSE_OptimalCost} depends on $\vecUbar$ and is negative, the optimization problem~\eqref{eq:Active_Learning_Optimization} is equivalent to maximizing the second term of~\eqref{eq:MMSE_OptimalCost}. Nevertheless, we present this problem in its minimization form and solve it using an iterative first-order method, such as steepest descent or projected steepest descent when constraints are involved~\cite{bertsekas2015convex}, while leveraging an analytical expression for its gradient. It should be noted that~\eqref{eq:Active_Learning_Optimization} is potentially non-convex due to how $\vecUbar$ influences matrices $\matrixPhibar$ and $\matrixM$. The iterative update rule for projected gradient descent is given by
\begin{equation} \label{eq:Projected_Steepest_Descent}
\vecUbar^{\iteration+1} = \setProject{\Ubb} \big[ \vecUbar^{\iteration} - \alpha_{\iteration} \gradient_{\!\vecUbar} \Cost^{\star}_{\mathrm{B}}(\vecUbar^{\iteration}) \big], \qquad
\gradient_{\!\vecUbar} \Cost^{\star}_{\mathrm{B}} = \begin{bmatrix}
\dfrac{\partial \Cost^{\star}_{\mathrm{B}}}{\partial \vecUbar_{1}} \fracComma \cdots \fracComma \dfrac{\partial \Cost^{\star}_{\mathrm{B}}}{\partial \vecUbar_{(\nX+\Trajectory\nU)}}
\end{bmatrix}^{\tr}\!\! \fracComma
\end{equation}
where $\iteration$ represents the current iteration step, $\setProject{\Ubb}[\cdot]$ denotes the projection operator that maps the argument onto $\Ubb$, $\alpha_{\iteration}$ is a positive stepsize, and $\gradient_{\vecUbar} \Cost^{\star}_{\mathrm{B}}(\vecUbar^{\iteration})$ is the gradient of the cost function evaluated at $\vecUbar^{\iteration}$. Various algorithms exist for selecting the stepsize $\alpha_{\iteration}$, including the standard approach of diminishing stepsize rules~\cite[p.~69]{bertsekas2015convex}. Furthermore, since the explicit description of $\Cost^{\star}_{\mathrm{B}}$ is available in~\eqref{eq:MMSE_OptimalCost}~(aka. zero-order information), more sophisticated methods such as exact line search can also be employed; we refer interested readers to~\cite[Ch.~2]{bertsekas2015convex} for further details. It is worth noting that these stepsize rules may require parameter tuning (e.g., a constant in diminishing stepsize methods) or involve computational overhead in line search techniques. To circumvent these possible limitations, one can also utilize the recent, easy-to-implement, adaptive stepsize from~\cite{malitsky20adaptive} defined as
$$
\alpha_{\iteration} \!=\! \min \biggl\{\!\sqrt{1+\beta_{\iteration-1}}\alpha_{\iteration-1} \fracComma \mkern2mu \frac{\norm{\vecUbar^{\iteration}-\vecUbar^{\iteration-1}}}{2\norm{\gradient_{\!\vecUbar} \Cost^{\star}_{\mathrm{B}}(\vecUbar^{\iteration}) - \gradient_{\!\vecUbar} \Cost^{\star}_{\mathrm{B}}(\vecUbar^{\iteration-1})}} \!\biggr\} \fracComma \qquad 
\beta_{\iteration} \!=\! \frac{\alpha_{\iteration}}{\alpha_{\iteration-1}} \fracComma \qquad k \geq 1,
$$
with initial conditions $\beta_{0} = \infty$ and $\alpha_{0} = 10^{-10}$. Finally, the gradient of the Bayesian MMSE estimation error~\eqref{eq:MMSE_OptimalCost} with respect to each $\vecUbar_{i}$ of the input vector is derived by applying the chain rule to $\Cost^{\star}_{\mathrm{B}}(\vecUbar)$, resulting in 
\begin{equation} \label{eq:Individual_Partial_Derivative} 
\dfrac{\partial \Cost^{\star}_{\mathrm{B}}}{\partial \vecUbar_{i}} 
= \trace \big( \Psi^{\star\tr}\!\Psi^{\star} \matrixGradient{\vecUbar_{i}}\matrixM + 2\Psi^{\star\tr}\! (\Psi^{\star}\matrixPhibar^{\tr} - \eye)\SigmaTheta{} \matrixGradient{\vecUbar_{i}}\matrixPhibar 
\big),
\end{equation}
where $\Psi^{\star}$ is defined in~\eqref{eq:MMSE_LinearEstimator}. The partial derivative terms $\matrixGradient{\vecUbar_{i}}\matrixM$ and $\matrixGradient{\vecUbar_{i}}\matrixPhibar$ depend on the gradient of the respective DBS as follows:
\begin{itemize}
\item $\matrixGradient{\vecUbar_{i}}\matrixPhibar = [\matrixGradient{\vecUbar_{i}}\MuPhi^{0}, \hdots, \matrixGradient{\vecUbar_{i}}\MuPhi^{\Trajectory}]$,
\item the $(t+1,t'+1)^{\nth}$ element of $\matrixGradient{\vecUbar_{i}}\matrixM$ is $\matrixGradient{\vecUbar_{i}}\matrixM_{tt'} = \trace \big((\SigmaTheta{} + \MuTheta{}\MuTheta{}^{\tr}) \matrixGradient{\vecUbar_{i}}\SigmaPhi^{tt'} \big)$,
\item the $(n+1)^{\nth}$ element of $\matrixGradient{\vecUbar_{i}}\MuPhi^{t}$ and $(m+1,n+1)^\nth$ element of $\matrixGradient{\vecUbar_{i}}\SigmaPhi^{tt'}\!$ are
$$
\matrixGradient{\vecUbar_{i}}\MuPhiElements{n}^{t} \!\!=\! \dfrac{\partial \Expectation{\phi_{n}(\vecX_{t})}}{\partial \vecUbar_{i}} \fracComma \qquad 
\matrixGradient{\vecUbar_{i}}\SigmaPhiElements{mn}^{tt'} \!\!=\! \dfrac{\partial \Expectation{\phi_{m}(\vecX_{t'})\phi_{n}^{\tr}(\vecX_{t})}}{\partial \vecUbar_{i}} - \matrixGradient{\vecUbar_{i}}\MuPhiElements{m}^{t'}\MuPhiElements{n}^{t} \!\!- 
\MuPhiElements{m}^{t'}\matrixGradient{\vecUbar_{i}}\MuPhiElements{n}^{t}\!.
$$
\end{itemize}
For Fourier basis functions~\eqref{eq:Fourier_Basis}, with the explicit expressions of the DBS $(\MuPhi^{t}, \SigmaPhi^{tt'})$ provided in Lemma~\ref{lemma:Fourier_basis_Explicit_DBS}, the partial derivatives $\matrixGradient{\vecUbar_{i}}\MuPhi^{t}$ and $\matrixGradient{\vecUbar_{i}}\SigmaPhi^{tt'}\!$, are the following DBS gradients:
\begin{enumerate} [label=(\roman*)]
\begin{subequations} \label{eq:DBS_Gradient}
\item \textbf{DBS mean gradient}: $\matrixGradient{\vecUbar_{i}}\MuPhiElements{0}^{t} \!= 0$, and for $n \geq 1$,
\begin{equation}
\label{eq:Partial_Derivative_MuPhi_Elements}
\matrixGradient{\vecUbar_{i}}\MuPhiElements{n}^{t} \!\!= \binarySum{\ell \in \{-1,1\}}{j\ell f_{n}^{\tr}\Abar_{t}\Bbar \one_{\{i\}} \expS{j\innerS{\Abar_{t}^{\tr} \ell f_{n}}{\Bbar\vecUbar}}\characteristicFuncS{\matrixSqrt{\SigmaWbar{}} \Abar_{t}^{\tr} \ell f_{n}}},
\end{equation}
\item \textbf{DBS covariance gradient}: $\matrixGradient{\vecUbar_{i}}\SigmaPhiElements{mn}^{tt'} \!= 0$ if $m = 0$ or $n = 0$. For $m, n \geq 1$, 
\begin{equation} \label{eq:Partial_Derivative_SigmaPhi_Elements}
\matrixGradient{\vecUbar_{i}}\SigmaPhiElements{mn}^{tt'} \!\!= \binarySum{\ell,\ell'\in\{-1,1\}}{\begin{multlined}[t]
j\big(\ell f_{m}^{\tr}\Abar_{t}+\ell'\! f_{n}^{\tr}\Abar_{t'}\big)\Bbar \one_{\{i\}} 
\expS{j\innerS{\Abar_{t}^{\tr}\ell f_{m}+\Abar_{t'}^{\tr}\ell'\! f_{n}}{\Bbar\vecUbar}} \\
\qquad \times \big( \characteristicFunc{\matrixSqrt{\SigmaWbar{}}\! (\Abar_{t}^{\tr}\ell f_{m}+\Abar_{t'}^{\tr}\ell'\! f_{n})}
- \characteristicFuncS{\matrixSqrt{\SigmaWbar{}} \Abar_{t}^{\tr} \ell f_{m}} \characteristicFuncS{\matrixSqrt{\SigmaWbar{}} \Abar_{t'}^{\tr} \ell'\! f_{n}}\big),
\end{multlined}
}
\end{equation}
\end{subequations}
\end{enumerate}
where $\Abar_{t}$ and $\Abar_{t'}$ are defined analogously to~\eqref{eq:matrixA_t}, and $\one_{\{i\}}$ is the single-entry vector with $1$ at index $i$ and zero elsewhere, i.e.,
$$
\begin{aligned}
\one_{\{i\}} & = [0, \hdots, 0, 1, 0, \hdots, 0]^{\tr}\!. \\[-10.7pt]
& \phantom{[0, \hdots, 0,.. 1}\hspace{3.4pt} \text{\scalebox{0.9}{$\uparrow$}} \\[-9.1pt]
& \phantom{[0, \hdots, 0,.. 1}\hspace{3.5pt} \text{\scalebox{0.9}{$i^{\nth}$}} \\[-7pt]
\end{aligned}
$$
In the case of Gaussian noise, it is sufficient to substitute the characteristic functions in the derived terms of~\eqref{eq:DBS_Gradient} with their corresponding expressions in~\eqref{eq:Gaussian_Characteristic}. The computational complexity of computing the gradient~\eqref{eq:Individual_Partial_Derivative} for each element of input per iteration of the first-order method is equivalent to that of the optimal Bayesian MMSE affine estimator, which scales as $\Ocal\big(\Trajectory^{3}\big)$ when $\Ntheta \ll \Trajectory$. In the following section, we numerically demonstrate the reduction in estimation error achieved by applying this active learning technique.
\section{Numerical Experiments} \label{sec:Numerical_Experiments} 
In this section, we evaluate the performance of four estimators through numerical examples: two \emph{approximate} regularized least-squares (RLS) linear estimators (DLS and MLS) from Section~\ref{subsec:Least_Square}, the optimal Bayesian MMSE affine estimator (BMS) from Section~\ref{subsec:Bayesian_Estimation}, and its integration with active learning (BAL) from Section~\ref{sec:Active_Learning}. To ease the reproducibility of these experiments, we provide our MATLAB library at \href{https://github.com/sasanvakili/Bayesian4Wiener}{https://github.com/sasanvakili/Bayesian4Wiener}.

Using a marginally stable dynamical system with a true function in Fourier subspace, we observe the effects of increasing process noise uncertainty and demonstrate the inherent inconsistency discussed in Proposition~\ref{proposition:Bayesian_MMSE_consistency}. We compare estimators using the squared error criterion, $||\vecTheta - \hat{\vecTheta}(\vecYbar)||^{2}$, over $10,\!000$ simulations, employing identical realizations of $\vecWbar$ and $\vecVbar$ across all estimators for fair comparison.
In the resulting plots, dashed lines represent the analytically computed mean squared error, $\Expectation{||\vecTheta - \hat{\vecTheta}(\vecYbar)||^{2}}$, for each method. Non-histogram plots display shaded areas representing the $20^{\nth}$ to $80^{\nth}$ percentile range of squared error, while histograms show probability densities.
\paragraph{\bf Experiment setup}
Our experiments are based on time-invariant Gaussian noise, $\vecV_t \sim \NormalDist( \zero, \sigma^{2}_{\vecV}\eye)$, $\vecW_{t+1}\sim \NormalDist(\zero, \sigma^{2}_{\vecW}\eye )$, and $\vecX_0 \sim \NormalDist(\MuX{0}, \sigma^{2}_{\vecX_{0}}\eye)$, with $\sigma^{2}_{\vecX_{0}} = \sigma^{2}_{\vecW}$. To examine the impact of process noise, we use three incremental variances, $\sigma^{2}_{\vecW} = \{0,\, 0.001,\, 0.01\}$, while maintaining a consistent measurement noise variance of $\sigmaV{}^{2} = 0.01$ across all experiments. Our experimental design involves generating $100$ independent samples each of $\vecTheta$, $\vecWbar$, and $\vecVbar$ for various trajectory lengths $\Trajectory$, resulting in $10,\!000$ experiments. The dynamical system under study is a linear time-invariant kinematic model representing a robot moving in two dimensions. In this model, the system states $\vecX_{t} \in \Real^{2}$ correspond to the robot's position, while the inputs $\vecU_{t} \in \Real^{2}$ represent velocities in each dimension. The system dynamics are defined by $\A_{t} = \eye$ and $\B_{t} = \Delta t \eye$, with a sampling time $\Delta t = 0.1$. We use $\MuX{0}=[3.2, 2.8]^{\tr}$ and $\vecU_{t} = 4.5\!\sum_{\upsilon \in \Upsilon} [\cos(\upsilon t),\ \sin(\upsilon t)]^{\tr}$, $\Upsilon = \{3, 5, 10, 20, 100\}$, for DLS, MLS, and BMS experiments, as well as for initializing the projected steepest descent algorithm of BAL. Each dimension of $\vecU_{t}$ is constrained within $[-200, 200]$, representing the robot's achievable velocity range. The BAL estimator derives $\vecUbar^{\star}$ within this input range as described in Section~\ref{sec:Active_Learning}. To ensure consistent initial states across all estimators, BAL does not optimize the first element of $\vecUbar$, namely $\MuX{0}$. Following the Fourier basis representation in~\eqref{eq:Fourier_Basis}, we employ $11$ unknown parameters $\vecTheta_{n}$, i.e., $n=0,\hdots,10$, with known frequency vectors $f_{n} \in \Real^{2}$. Specifically, $f_0 = [0,0]^{\tr}$, $f_n = [n\frac{2\pi}{10},0]^{\tr}$ for $n=\{1,2,3\}$, and $f_n = [(n-7)\frac{2\pi}{10},\frac{2\pi}{6}]^{\tr}$ for $n=\{4,...,10\}$. The prior distributions of the unknown parameters follow a uniform distribution $\UniformDist(2,8)$, implying $\MuTheta{n} = 5$ and $\sigmaTheta{n}^{2} = 3$, from which the true parameter values are drawn.
\paragraph{\bf Benchmark~1: optimal Bayesian vs.~RLS expected error}
Our first numerical benchmark involves $10,\!000$ simulations for a trajectory of $\Trajectory=100$, i.e., $101$ total measurements, to tune the hyperparameter $\lambda$ of DLS and MLS for three incremental process noise scenarios. The error of BMS remains constant across $\lambda$ values, as it does not depend on this hyperparameter. Figure~\ref{Fig:LSHyperParam_Error} illustrates that the squared error of DLS and MLS deviates increasingly from BMS as the process noise variance increases. When $\sigma^{2}_{\vecW} = 0$, as shown in the left plot of Figure~\ref{Fig:LSHyperParam_Error}, all three estimators perform comparably for small $\lambda$ values. However, the middle and right plots of Figure~\ref{Fig:LSHyperParam_Error} demonstrate that DLS and MLS significantly underperform compared to BMS when $\sigma^{2}_{\vecW} = \{0.001,\, 0.01\}$.
\begin{figure}[htbp]
\centering 
\includegraphics[width=0.3285\linewidth]{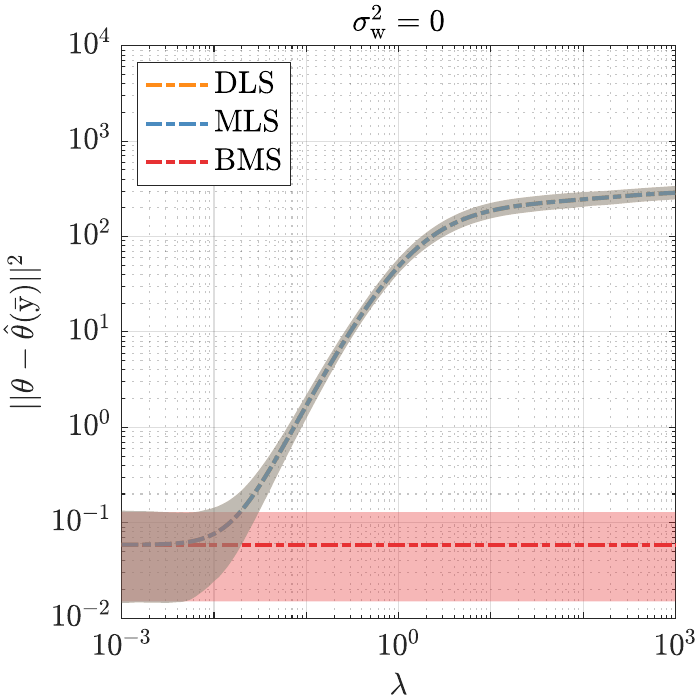} \hfill
\includegraphics[width=0.3285\linewidth]{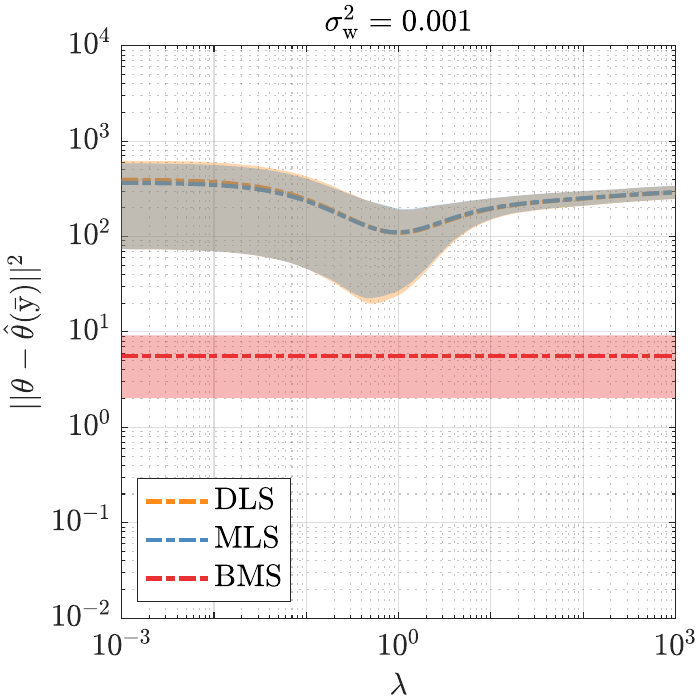} \hfill
\includegraphics[width=0.3285\linewidth]{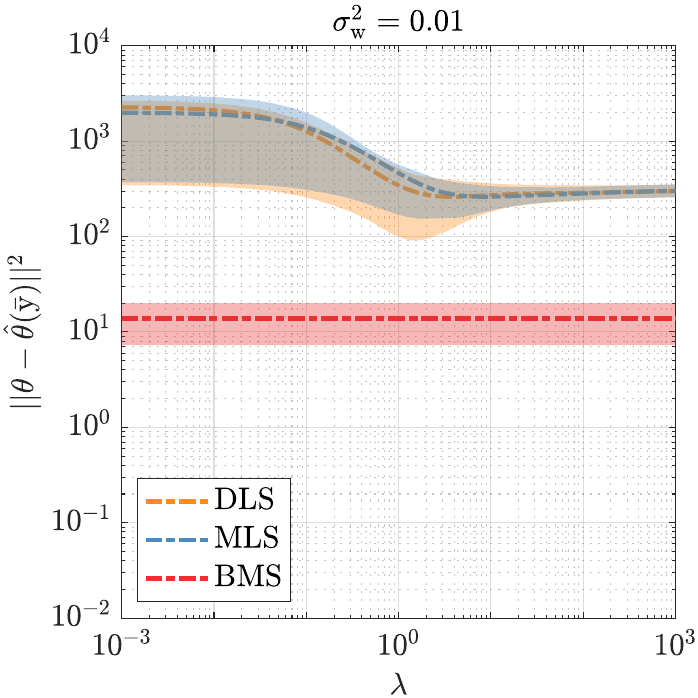}
\caption{Squared errors of DLS, MLS, and BMS for different $\lambda$ values}
\label{Fig:LSHyperParam_Error}
\end{figure}
\paragraph{\bf Benchmark~2: optimal Bayesian vs.~optimal RLS error histogram}
After tuning the hyperparameter $\lambda$ from Benchmark~1 and determining the optimal hyperparameter $\lambda^{\star}$ for MLS in each process noise case for a trajectory length of $\Trajectory=100$, we compare the squared error difference between methods. The estimates are denoted as follows: MLS using $\lambda^{\star}$ as $\hat{\vecTheta}_{\mathrm{MLS}}(\vecYbar, \lambda^{\star})$, BMS as $\hat{\vecTheta}_{\mathrm{BMS}}(\vecYbar)$, and BAL as $\hat{\vecTheta}_{\mathrm{BAL}}(\vecYbar, \vecUbar^{\star})$, where BAL utilizes optimal inputs $\vecUbar^{\star}$. Figure~\ref{Fig:ErrorDiff_Histogram} presents the probability density histogram of squared error differences between these pairs for $101$ total measurements and across $10,\!000$ simulations. Each histogram's area sums to unity, representing the probability density function, with dashed lines indicating the analytically computed mean of the differences. The red, blue, and orange distributions correspond to process noise variances of $\sigma^{2}_{\vecW} = 0$, $0.001$, and $0.01$, respectively. As observed in the left plots, MLS shows less squared error than BMS in only $1.62\%$ and $0.03\%$ of cases when $\sigma^{2}_{\vecW} = 0.001$ and $0.01$, respectively. The middle plot demonstrates that MLS never outperforms BAL. These results indicate that BMS and BAL almost surely outperform MLS when process noise exists. Without process noise, BMS and MLS yield similar results, as evident by the smaller red histogram centred around zero on the left plot. The right plot shows that BAL with $\vecUbar^{\star}$ consistently provides lower error than BMS with the input indicated in Experiment Setup when $\sigma^{2}_{\vecW} = 0$ or $0.001$, while BMS outperforms BAL in $1.7\%$ of cases when $\sigma^{2}_{\vecW} = 0.01$.
\begin{figure}[htbp]
\centering 
\includegraphics[width=0.3285\linewidth]{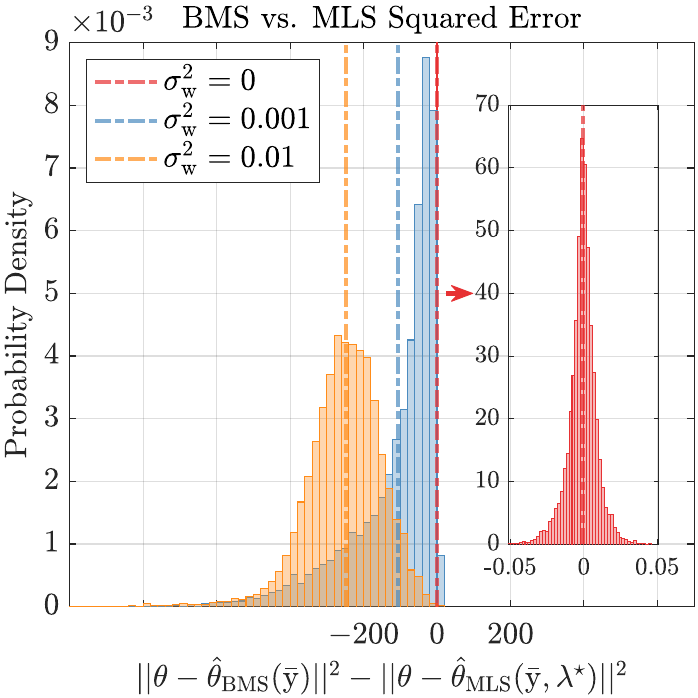} \hfill
\includegraphics[width=0.3285\linewidth]{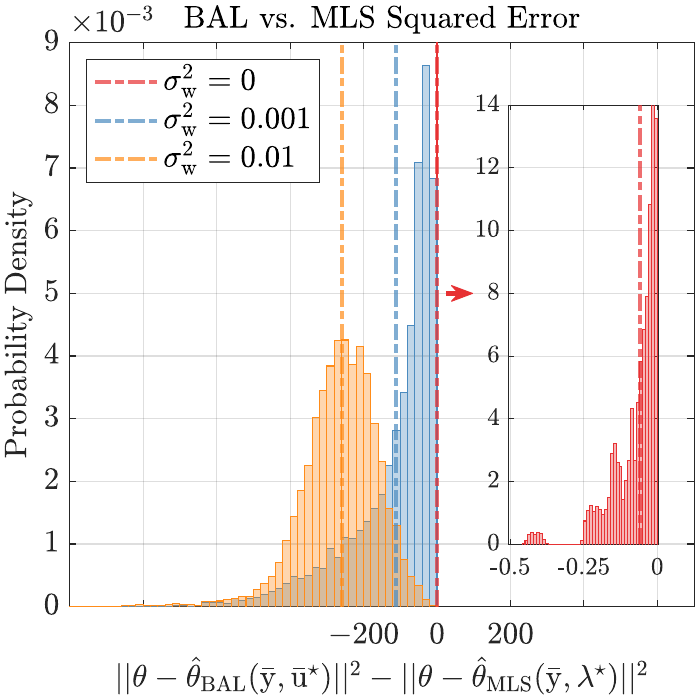} \hfill
\includegraphics[width=0.3285\linewidth]{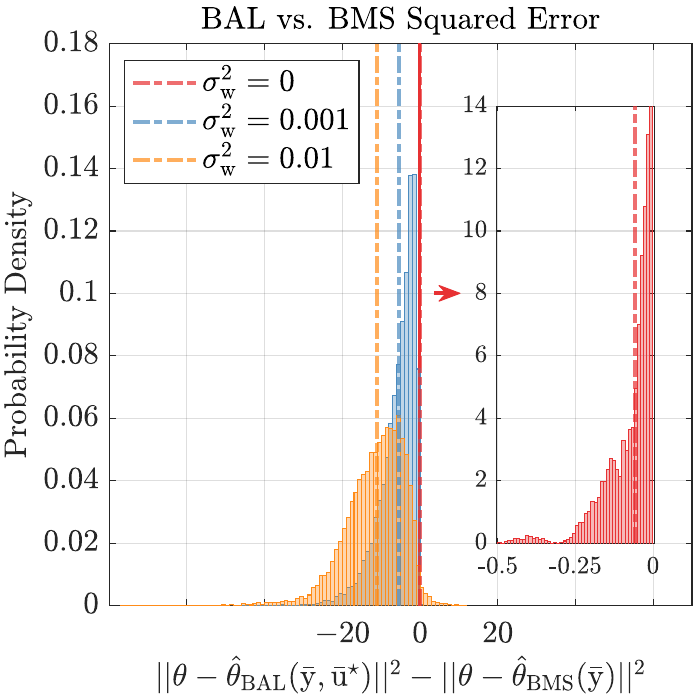}
\caption{Distribution of squared error differences between pairs of estimators}
\label{Fig:ErrorDiff_Histogram}
\end{figure}
\paragraph{\bf Benchmark~3: impact of trajectory length and process noise} 
Building on our previous benchmarks, we extend our analysis to $10,\!000$ simulations, each featuring trajectories of varying lengths, $\Trajectory \in \{0,\, 4,\, 10,\, 13,\, 16,\, 20,\, 25,\, 32,\, 40,\, 50,\, 63,\, 79,\, 100\}$. Figure~\ref{Fig:MeasurementLen_Error_OptInput} compares the squared errors, shown as shaded areas, and analytically computed mean squared errors, represented by dashed lines, for DLS, MLS, BMS, and BAL across three process noise variances. For DLS and MLS, we employ the optimal hyperparameter $\lambda^{\star}$, while BAL utilizes optimized input $\vecUbar^{\star}$, derived separately for each trajectory length. The results confirm that input optimization significantly reduces squared error when the measurement count equals the number of unknown parameters. In the presence of process noise, i.e., $\sigma^{2}_{\vecW} = \{0.001,\, 0.01\}$, DLS and MLS diverge as they fail to account for system states drift due to accumulated process noise. In contrast, Bayesian methods maintain robustness by precisely calculating the covariance matrices. Notably, the estimation error reduction of Bayesian estimators slows considerably with increasing statistically dependent measurements. This observation confirms the inconsistency described in Proposition~\ref{proposition:Bayesian_MMSE_consistency}, which arises from the marginal stability of the chosen dynamical system.
\begin{figure}[htbp]
\centering 
\includegraphics[width=0.3285\linewidth]{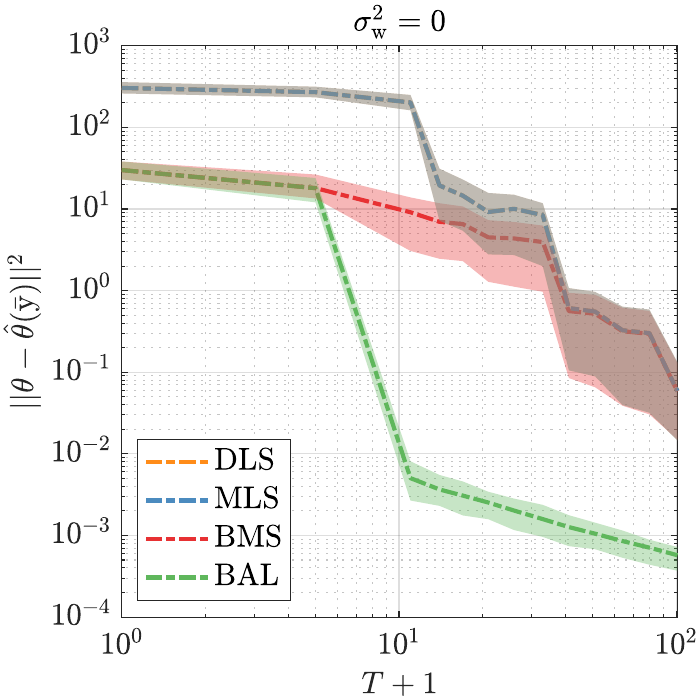} \hfill
\includegraphics[width=0.3285\linewidth]{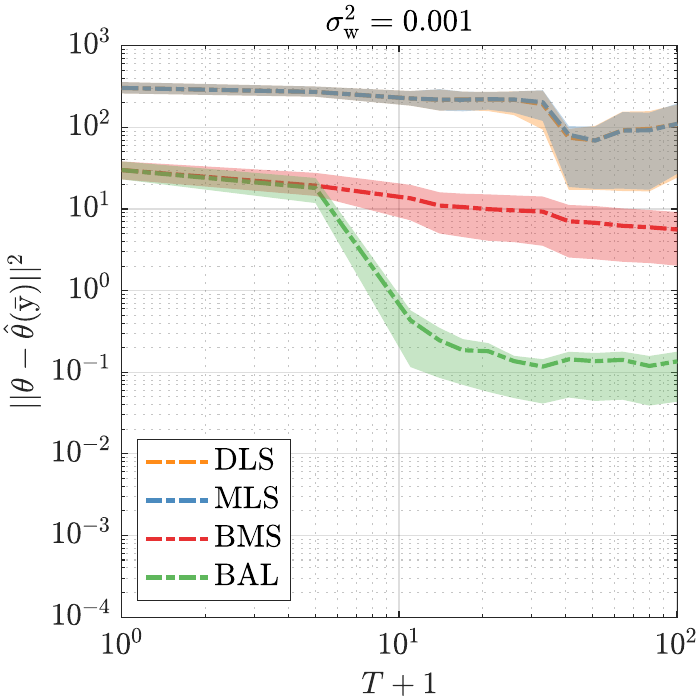} \hfill
\includegraphics[width=0.3285\linewidth]{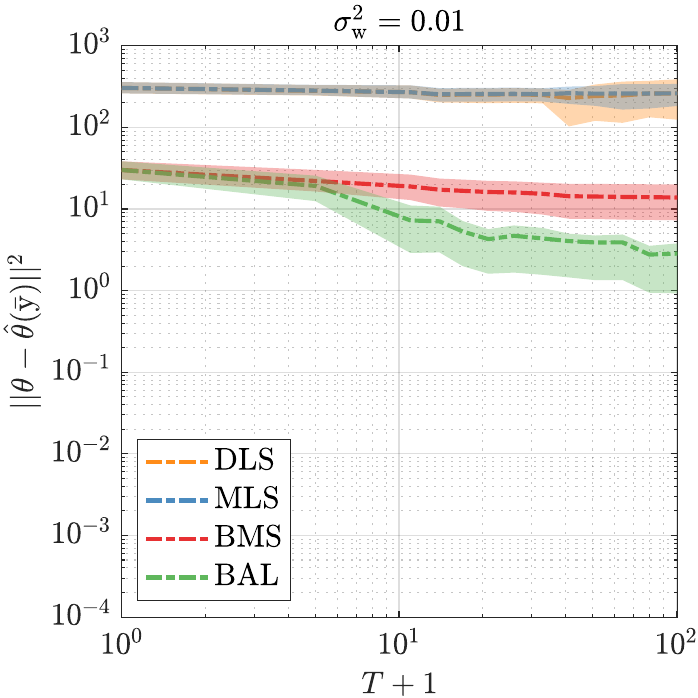}
\caption{Squared errors of DLS, MLS, BMS, and BAL for varying $\Trajectory$}
\label{Fig:MeasurementLen_Error_OptInput}
\end{figure}
\paragraph{\bf Benchmark~4: evaluation with multiple trajectories } 
Lastly, we address the slowing rate of estimation error reduction observed in Figure~\ref{Fig:MeasurementLen_Error_OptInput} of the previous benchmark by employing multiple batches of independent trajectories which demonstrates the consistency condition of the BAL estimator as outlined in Proposition~\ref{proposition:Multi_Batch_Bayesian_MMSE_consistency}. We conduct an experiment using $10,\!000$ simulations, comparing three settings with different numbers of independent trajectories: $\batch = \{1, 11, 101\}$. The same realizations of $\vecWbar$ and $\vecVbar$ were maintained across all cases. When $\batch = 101$, each trajectory consists of $1$ sample; when $\batch = 11$, we have $10$ trajectories of $10$ samples each and $1$ trajectory of $1$ sample; and when $\batch = 1$, there is only $1$ trajectory of $101$ samples, representing the performance of the BAL estimator from the two previous benchmarks with $\Trajectory = 100$. For each setting, optimal inputs $\vecUbar^{\star}$ are obtained using~\eqref{eq:Active_Learning_Optimization}, which includes $\MuX{0}$ for different trajectories except the first. The optimization of $\MuX{0}$ for the first batch is excluded to ensure that the BAL estimates for $\batch = 1$ are identical to those obtained from $\Trajectory = 100$ in Benchmarks~2 and~3. Figure~\ref{Fig:IndepBatch_Error_OptInput} compares the probability density histograms of the squared estimation error for all three settings under process noise variances $\sigma^{2}_{\vecW} = 0.001$ and $0.01$. Unlike Figure~\ref{Fig:ErrorDiff_Histogram}, the horizontal axes here use a logarithmic scale to better resolve differences in the error distributions. The $\sigma^{2}_{\vecW} = 0$ scenario is excluded because, in the absence of process noise, measurements across time steps become statistically independent, providing similar results for all three settings. Results show that the BAL estimator achieves the least estimation error with $101$ independent samples, followed by $11$ independent trajectories, and then $1$ trajectory of $101$ correlated samples. The comparison between the two plots demonstrates that the estimation error increases with process noise, highlighting its impact on estimation performance. This experiment confirms the consistency condition in Proposition~\ref{proposition:Multi_Batch_Bayesian_MMSE_consistency}.
\begin{figure}[htbp]
\centering 
\includegraphics[width=0.4523\linewidth]{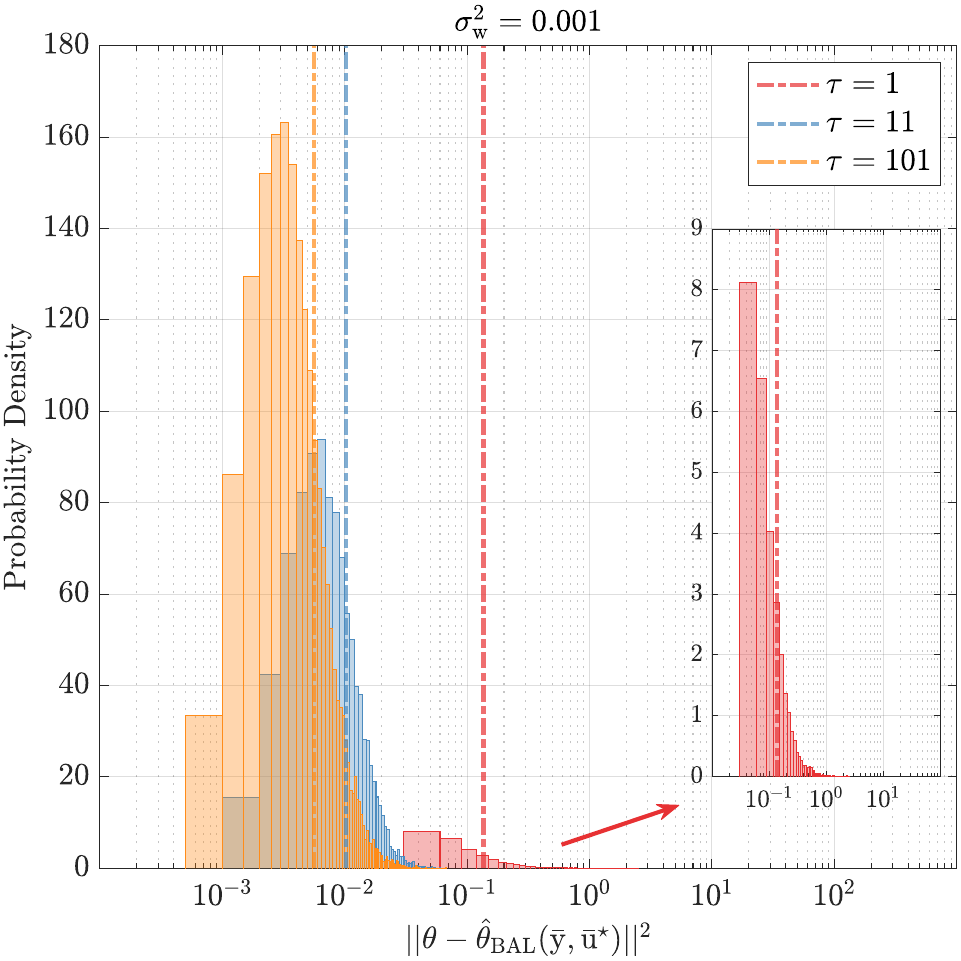} \qquad 
\includegraphics[width=0.4523\linewidth]{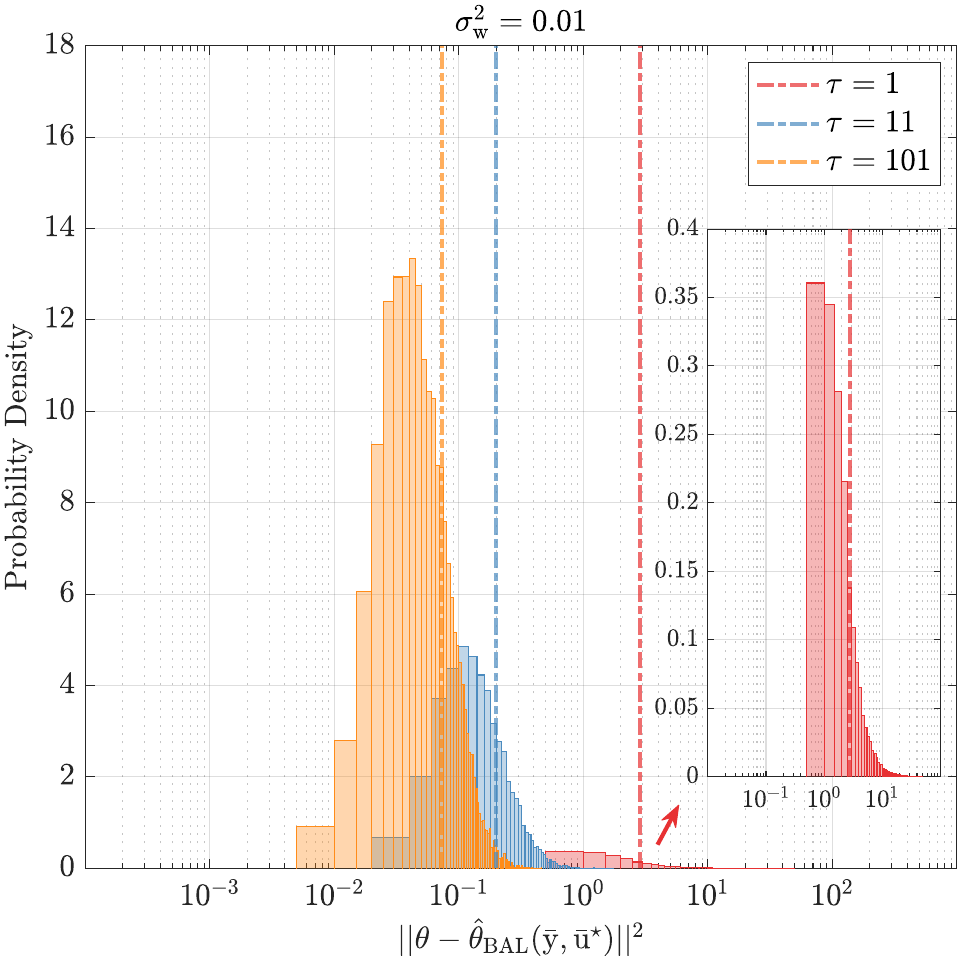}
\caption{Distribution of squared errors for multiple $\tau$}
\label{Fig:IndepBatch_Error_OptInput}
\end{figure}
\paragraph{\bf Experimental analysis summary} 
Benchmark~1 and Benchmark~2 demonstrate the superiority of the Bayesian MMSE affine estimator, showing that it almost surely outperforms the \emph{approximate} regularized least-squares method (cf.\,Figure~\ref{Fig:LSHyperParam_Error} and Figure~\ref{Fig:ErrorDiff_Histogram}). Furthermore, when paired with active learning, the Bayesian MMSE affine estimator achieves the lowest estimation error (cf.\,Figure~\ref{Fig:ErrorDiff_Histogram} and Figure~\ref{Fig:MeasurementLen_Error_OptInput}). Benchmark~3 further investigates the impact of process noise in a marginally stable dynamical system, emphasizing that unbounded growth in process noise slows down estimation error reduction (cf.\,Figure~\ref{Fig:MeasurementLen_Error_OptInput}). As the number of dependent data points increases, the information gained from each new measurement diminishes, preventing convergence to the true value even with infinite measurements. Benchmark~4 highlights the importance of using multiple independent trajectories for accurate parameter estimation under process noise. It demonstrates convergence as the number of independent trajectories increases (cf.\,Figure~\ref{Fig:IndepBatch_Error_OptInput}).

\end{document}